\documentclass[twoside]{article}

\usepackage[preprint]{aistats2027}
\usepackage[round]{natbib}

\usepackage{tikz}
\usetikzlibrary{arrows.meta,positioning, tikzmark, calc}
\usepackage{algorithm}
\usepackage{booktabs}
\usepackage{algpseudocode}   
\usepackage{amsmath}
\usepackage{amssymb}
\usepackage{amsthm}
\theoremstyle{plain}      

\usepackage{soul}

\newtheorem{theorem}{Theorem}
\newtheorem{lemma}[theorem]{Lemma}
\newtheorem{proposition}[theorem]{Proposition}

\theoremstyle{definition} 
\newtheorem{definition}[theorem]{Definition}

\theoremstyle{remark}

\usepackage[colorlinks=true, citecolor=blue, linkcolor=blue, urlcolor=blue]{hyperref}
\usepackage{graphicx}  
\usepackage{booktabs}  
\usepackage{bm}
\usepackage{xcolor} 
\usepackage{soul}

\DeclareMathOperator*{\argmin}{arg\,min}
\DeclareMathOperator*{\argmax}{arg\,max}

\begin{document}

%

%

\twocolumn[

\aistatstitle{Learning Non-Gaussian SEMs with Conditional Risk, and Why the Direction of Recursion Matters}

\aistatsauthor{Ujas Shah \And Panagiota Birmpa \And  Rebecca Morrison}

\aistatsaddress{University of Colorado Boulder \And  Heriot-Watt University \And University of Colorado Boulder} 
]

\begin{abstract}
Variance-based methods recover a Gaussian linear structural equation model (SEM) with equal or suitably constrained error variances, either by recursively identifying the source nodes with smallest conditional variance (top-down method) or sink nodes with highest conditional variance (bottom-up method). We show that the use of conditional variance is incidental: replacing it with the \emph{conditional risk}, i.e., the minimal expected loss of linear prediction under the negative log-likelihood of the noise distribution, extends both methods to non-Gaussian noise. Existing methods are the least-squares special case. This yields two main results. First, exploiting the noise distribution improves structure recovery: the matched-loss procedures recover the SEM more accurately than their least-squares counterparts and the LiNGAM methods, which are designed for non-Gaussianity. Second, the two methods, interchangeable in the Gaussian setting, separate under non-Gaussian noise: the top-down method is correct for every noise density with finite differential entropy, whereas the bottom-up method requires a regularity condition and can fail without it, for instance under mean-zero exponential or Cauchy noise. Simulations confirm both results.
\end{abstract}

\section{Introduction}
Directed acyclic graphs (DAGs), or Bayesian networks, provide a compact representation of the conditional independence and causal relations among a collection of random variables \citep{spirtes2000causation, Pearl_2009}. Given a random vector \(X = (X_1, \dots, X_d)^\top\), its joint distribution can be associated with a DAG \(\mathcal{G}\) whose nodes correspond to the variables and directed edges encode the dependence of each variable on its parents. Equivalently, the vector can be described by a structural equation model (SEM) as a collection of equations of the form: \[X_i = g_i \big(X_{\mathrm{PA}(i)}, N_i \big), \qquad i = 1, \dots, d, \label{eq:sem_general}\] where \(\mathrm{PA}(i) \subset \{1, \dots, d\} \setminus \{i\}\) indexes the parents of \(X_i\), and \(N_i\) is an exogenous noise variable, i.e., a random variable whose generating mechanism is not specified by the causal model. Recovering the DAG, or the SEM, that generated the set of observations uncovers the dependence structure among the variables, and, under the causal Markov condition and the assumption of causal sufficiency, carries causal information~\citep{eberhardt2017introduction}. The SEM identifies which variables directly influence which others, and how the system would respond to interventions (understood as external modifications that fix the value of a variable and propagate through the causal structure to its descendants). Consequently, structure learning has found applications in varied fields such as bioinformatics~\citep{sachs2005causal, shen2020challenges}, ecology~\citep{lefcheck2016piecewisesem, runge2019inferring}, and finance~\citep{sanford2012bayesian, hyvarinen2010estimation}.

\paragraph{Learning the graph structure.} Learning the graph from purely observational data is difficult, for two reasons. The first is computational: the number of DAGs grows super-exponentially in the number of nodes, and searching over this space is NP-hard in general~\citep{chickering1996learning, chickering2004large}. The second is statistical: without further assumptions, the DAG is identifiable only up to its Markov equivalence class (the set of graphs encoding the same conditional independence relations). A substantial literature has developed methods that recover this equivalence class, most prominently the constraint-based PC algorithm~\citep{spirtes1991algorithm}, the score-based greedy equivalence search (GES)~\citep{chickering2002optimal}, and, more recently, continuous-optimization formulations~\citep{deng2024markov}.

In this paper, we are interested in recovering the true DAG itself, rather than its Markov equivalence class. We focus on the widely studied linear additive noise SEM, in which each variable is a linear function of its parents plus independent noise:
\begin{equation}
    X = \bm \Lambda X + N,
    \label{eq:sem_linear}
\end{equation}
where $\bm \Lambda \in \mathbb{R}^{d \times d}$ encodes the linear dependence of each variable on its parents and $N = (N_1, \dots, N_d)^\top$ is a vector of mutually independent noise variables. We assume that the system has no latent confounders. Additional assumptions, such as non-Gaussianity or equal error variances, can make DAGs identifiable from data~\citep{shimizu2006linear, loh2014high, peters2014identifiability}. Under equal error variances, simple variance-based algorithms can recover the topological order of the DAG by recursively identifying either source nodes, i.e., nodes without parents, with the smallest conditional variance (top-down method; \citealp{chen2019causal}), or sink nodes, i.e., nodes without children, with the highest conditional variance (bottom-up method; \citealp{ghoshal2018learning}). These variance-based methods are distribution free and make no use of knowledge of the error distribution. In many applications, however, one may have reasonable knowledge of the error distribution---for example, heavy-tailed errors in finance---and it is natural to ask whether this knowledge can be leveraged for more accurate structure recovery.

\paragraph{Our results.} In this work we answer this question affirmatively and uncover an unexpected asymmetry between the two methods (top-down and bottom-up) along the way. We extend the top-down and bottom-up variance-based approaches by replacing their use of conditional variance with a \emph{conditional risk}: the minimal expected loss of linear prediction under the negative log-likelihood of the assumed noise density. The Gaussian equal-variance methods are recovered as the least-squares case, and their recursive order-identification logic transfers to our methods. Under least squares, the top-down and bottom-up recursions are equally valid; under general losses, they are not. The top-down recursion, which only ever regresses a variable on its non-descendants, is correct for any noise density of finite differential entropy; the bottom-up recursion additionally requires the noise to satisfy a regularity condition. We show examples of noise distributions that do not meet the condition, where the bottom-up method fails, but the top-down method recovers the graph. In this sense, our work is both a bridge between the Gaussian and non-Gaussian structure learning literature and a re-examination of the choice between top-down and bottom-up recursion. Simulation experiments confirm this: losses matched to the noise perform better than their least-squares counterparts and LiNGAM family methods, and the top-down method dominates the bottom-up method, mildly in the regular regime and decisively in degenerate ones.

\paragraph{Paper organization.} The remainder of the paper is organized as follows. Section~\ref{sec:prelim_and_related} reviews related work, Section~\ref{sec:notation} fixes notation, Section~\ref{sec:method} presents the method and theory, and Section~\ref{sec:experiments} the experiments.

\section{Related Work}
\label{sec:prelim_and_related}
Literature on structure learning for the linear additive noise model~\eqref{eq:sem_linear} divides naturally according to whether the noise is Gaussian or non-Gaussian. We organize related work by noise distribution (Figure~\ref{fig:method-grid}).

\begin{figure*}[t]
\centering
\def\labw{2cm}
\def\cellw{6.5cm}
\def\txtw{6.0cm}   
\def\hlw{6cm}    
\begin{tikzpicture}[
  font=\small,
  head/.style={draw, rounded corners=2pt, fill=gray!15, font=\small\bfseries,
    align=center, anchor=north west, inner sep=3pt},
  cell/.style={draw, rounded corners=2pt, fill=gray!4, anchor=north west,
    minimum width=\cellw},
  txt/.style={anchor=north west, align=left, text width=\txtw, inner sep=0pt},
  hl/.style={draw, rounded corners=2pt, fill=gray!20, anchor=north west,
    align=left, text width=\hlw, inner sep=3pt},
]
  \def\xA{2.5}   
  \def\xB{9.30}   
  \def\yH{0}      
  \def\yG{-1.0}  
  \def\yN{-3.5}  

  \node[head, minimum width=\cellw, minimum height=0.7cm, text width=6.0cm]
    at (\xA, \yH) {Arbitrary noise scales};
  \node[head, minimum width=\cellw, minimum height=0.7cm, text width=6.0cm]
    at (\xB, \yH) {Identically distributed noise};

  \node[head, minimum width=\labw, minimum height=2.30cm, text width=2cm]
    at (0, \yG) {Gaussian\\errors};
  \node[head, minimum width=\labw, minimum height=2.30cm, text width=2cm]
    at (0, \yN) {Non-Gaussian\\errors};

  \node[cell, minimum height=2.3cm] at (\xA, \yG) {};
  \node[txt] at (\xA+0.15, \yG-0.15)
    {\textendash~Identifiable only up to the Markov \\
    \phantom{\textendash}~equivalence class (e.g., PC, GES)};

  \node[cell, minimum height=2.30cm] at (\xB, \yG) {};
  \node[txt] (gtxt) at (\xB+0.15, \yG-0.15)
    {\textendash~Greedy DAG search\\
     \textendash~Continuous optimization (NOTEARS,\\
     \phantom{\textendash}~GOLEM, DAGMA)\\};
  \node[hl] (var) at ([yshift=-1pt]gtxt.south west)
    {\textendash~Top-down and bottom-up conditional\\
    \phantom{\textendash}~variance methods};
    
  \node[cell, minimum height=2.30cm] at (\xA, \yN) {};
  \node[txt] at (\xA+0.15, \yN-0.15)
    {\textendash~LiNGAM family (ICA-, Direct-,\\
     \phantom{\textendash~}Opt-LiNGAM)\\
     \textendash~Continuous optimization (NOTIME)};

  \node[cell, minimum height=2.30cm] at (\xB, \yN) {};
  \node[txt] (ntxt) at (\xB+0.15, \yN-0.15)
    {\textendash~Methods from the cell above and the \\
    \phantom{\textendash}~ cell to the left apply\\
     \textendash~Entropy-based ordering (ETPIA)};
  \node[hl] (ours) at ([yshift=-1pt]ntxt.south west)
    {\textendash~\textbf{Top-down and bottom-up}\\
    \phantom{\textendash}~\textbf{conditional risk methods (ours)}};

  \draw[-{Latex[length=2mm]}, dashed]
    (ours.east) -- ++(0.45,0) |- (var.east)
    node[pos=0.25, rotate=90, anchor=north, align=center, font=\footnotesize]
      {Special case under \\
      Gaussian noise};
\end{tikzpicture}
\caption{Representative classes of methods that recover the true DAG of a linear SEM, by noise distribution (rows) and noise-scale assumption (columns). The list is not exhaustive. The bottom-right assumptions are special cases of both neighboring cells, so their methods also apply there, provided the required moments exist.}
\label{fig:method-grid}
\end{figure*}

When the noise is Gaussian, the DAG is identifiable only up to Markov equivalence~\citep{Pearl_2009}. \citet{peters2014identifiability} showed that identifiability of the \emph{true} DAG is restored under the assumption of equal error variances, $\sigma_1^2 = \dots = \sigma_d^2 = \sigma^2$: no two distinct DAGs with equal-variance Gaussian noise can generate the same observational distribution. 

This motivated variance-based algorithms that learn a topological ordering one node at a time, then select parents and estimate edge weights. Top-down methods~\citep{chen2019causal} recursively identify a source node among the remaining vertices: under equal variances, a source of the remaining subgraph attains the minimal conditional variance given previously selected vertices. Bottom-up approaches~\citep{ghoshal2018learning} instead recursively identify a sink node: under equal variance, a sink shows the highest conditional variance given all other variables. These early variance-based guarantees hinge on accurate estimation of the covariance or precision matrix, and their sample complexity consequently degrades with the conditioning of the covariance matrix $\Sigma$: the required number of samples grows polynomially with the condition number, a dependence that can be severe since the condition number may itself grow with the depth of the graph. More recent work removes this dependence by replacing global precision-matrix estimation with nodewise regressions that directly estimate the relevant conditional variances \citep{park2023computationally,pmlr-v267-daskalakis25a}.

The equal-variance assumption can be relaxed somewhat. Identifiability continues to hold when the error variances are not equal but known, and variance-based recovery extends to heteroscedastic settings in which the noise variances satisfy suitable ratio or ordering conditions \citep{ghoshal2018learning, park2020heterogeneous}. Our analysis carries over to these settings, but for clarity we focus on the equal-variance case here.

A complementary line of work casts structure learning as a continuous optimization problem by encoding acyclicity through a smooth function of the adjacency matrix, such as the trace exponential of NOTEARS~\citep{zheng2018dags}, or the log-determinant characterization of DAGMA~\citep{bello2022dagma}. Another method, GOLEM~\citep{ng2020role}, replaces the hard constraint with soft sparsity and acyclicity penalties. Because these scores are built on squared loss or the Gaussian likelihood, their minimizers are guaranteed to recover the true DAG only under equal (or known) noise variances, the same assumption that underlies variance-based methods~\citep{reisach2021beware, ng2024structure}.

When the errors are non-Gaussian, the linear SEM is identifiable without any assumption on the noise scales by the LiNGAM (Linear non-Gaussian acyclic model) family of methods. ICA-LiNGAM~\citep{shimizu2006linear} estimates the ordering with independent component analysis, while DirectLiNGAM~\citep{shimizu2011directlingam} recovers topological order top-down, selecting at each step the variable that is most independent of the residuals of the remaining variables regressed on it. \citet{hyvarinen2013pairwise} replace the independence measure in DirectLiNGAM with pairwise likelihood ratios. More recently, \citet{oh2025optimal} introduced OptLiNGAM, a sample-optimal procedure for SEMs with sub-Gaussian noise, and NOTIME~\citep{pmlr-v258-berrevoets25a} brought identifiability under LiNGAM to differentiable structure learning by penalizing the joint dependence of the fitted residuals.

Closest to our work are methods that order variables by entropy. \citet{xie2019efficient} study linear SEMs with i.i.d.\ noise (ETPIA). They approximate entropy with a Gram--Charlier expansion in the variance, skewness, and kurtosis, and then remove the source's effect by least-squares regression during recursion, so the method requires finite fourth moments. \citet{feigenbaum2024causal} identify sources and sinks by comparing conditional entropies with noise entropy, but their study is limited to discrete noise distributions. Our criterion is also entropy-based, but focuses on continuous variables in linear models.

Other methods keep the recursive ordering but replace the variance with another measure of residual uncertainty. \citet{gao2020polynomial} (NPVAR) order nonparametric DAGs top-down using residual variances from nonparametric regression, without assuming linearity or independent noise. \citet{rajendran2021structure} replace the conditional variance with Bregman information, a generalization of variance induced by an exponential family, and prove identifiability when the Bregman information is equal across variables. Our conditional risk differs from both. While \citet{gao2020polynomial} measure uncertainty by variance alone, our criterion assumes a known noise density and uses its full shape. Unlike \citet{rajendran2021structure}, we do not require an exponential family and cover densities such as Student-$t$ and Cauchy. We also study both top-down and bottom-up recursions, while both of these works use only the top-down direction.

Our work sits between the Gaussian and non-Gaussian regimes. Like the variance-based methods, it learns the ordering by a recursion over nodewise regressions; like the LiNGAM family, it identifies the DAG under non-Gaussian noise, including heavy-tailed noise with no finite variance. The price is the constraint on the scale of noise variables.

\section{Notation}
\label{sec:notation}
For a positive integer $d$, let $[d] = \{1, \dots, d\}$. We consider a directed acyclic graph $\mathcal{G} = (V, E)$ with vertex set $V = [d]$, where each vertex $i$ is associated with a random variable $X_i$, and an edge set $E \subset V \times V$, where $(j,i) \in E$ denotes the directed edge $j \to i$. For a vertex $i$, we write $\mathrm{PA}(i)$, $\mathrm{CH}(i)$, $\mathrm{AN}(i)$, and $\mathrm{DE}(i)$ for its sets of parents, children, ancestors, and descendants in $\mathcal{G}$, respectively. The non-descendants of \(i\) are $V \setminus (\mathrm{DE}(i) \cup \{i\})$, i.e., the vertices other than $i$ that cannot be reached from $i$ by a directed path. A vertex with no parents is a source (or root) and a vertex with no children is a sink (or terminal vertex); every DAG contains at least one of each. A permutation \(t=(t_1,\ldots,t_d)\) of $[d]$ is a topological ordering of $\mathcal{G}$ if every parent precedes its child: writing \(t^{-1}(i)\) for the position of vertex \(i\), $t^{-1}(j) < t^{-1}(i)$ whenever $j \in \mathrm{PA}(i)$. Acyclicity guarantees that at least one such ordering exists, though it need not be unique.

We study the linear structural equation model~\eqref{eq:sem_linear} where $X = (X_1, \dots, X_d)^\top$, the matrix $\bm \Lambda \in \mathbb{R}^{d \times d}$ has $\Lambda_{ij} \neq 0$ if and only if $j \in \mathrm{PA}(i)$, and $N = (N_1, \dots, N_d)^\top$ is a vector of i.i.d.\ noise variables with location zero. Acyclicity of $\mathcal{G}$ is equivalent to the existence of a permutation matrix $\bm P$ such that $\bm P \bm \Lambda \bm P^\top$ is strictly lower triangular. Writing $\bm B = \bm I - \bm \Lambda$, which is invertible under acyclicity, the model implies $X = \bm B^{-1} N$; when the noise has variance \(\sigma^2\) the precision matrix of $X$ is $\bm \Omega = \bm \Sigma^{-1} = \bm B^\top \bm D^{-1} \bm B$, where $\bm D = \sigma^2 \bm I$. The goal of structure learning is to recover $\mathcal{G}$ (equivalently, the support of $\bm \Lambda$) from the samples of \(X\).

\section{Generalized Ordering-Based Learning via Conditional Risks}
\label{sec:method}

The variance-based methods reviewed in Section~\ref{sec:prelim_and_related} all rest, in one form or another, on the fact that under the equal-variance assumption, conditional variances reveal the topological order. In this section we first recall this fact and then show that neither direction actually depends on the conditional variance. Replacing squared error with the objective matched to the (known) noise distribution yields a family of \emph{conditional risks} that support the same recursive strategies, in both directions, while using the full noise density rather than only second moments.

\subsection{Conditional Variance and Least Squares}
\label{ssec:condvar_lsq}

For a subset $S \subset [d]$ and $i \notin S$, the linear conditional variance of $X_i$ given $X_S$ admits the variational form
\begin{equation}
    \mathrm{Var_{lin}}(X_i \mid X_S) \;=\; \min_{\beta \in \mathbb{R}^{|S|}} \; \mathbb{E}\big(X_i - \beta^\top X_S\big)^2.
    \label{eq:condvar_variational}
\end{equation}
For jointly Gaussian variables this statistic coincides with the conditional variance, but in general is only the \emph{linear} conditional variance. Under equal noise variances, this statistic helps us recognize variable ordering from both ends. From the top: if $S$ is an ancestral set, i.e., closed under taking parents, a remaining variable attains $\mathrm{Var_{lin}}(X_i \mid X_S) = \sigma^2$ if all its parents lie in $S$, and exceeds $\sigma^2$ otherwise; recursively selecting the minimizer is the top-down method. From the bottom, the relevant fact is the following restatement of Proposition~2 from~\citet{ghoshal2018learning}.

\begin{lemma}[Sinks maximize the full-conditional variance]
\label{lem:ls_sink}
Consider the model~\eqref{eq:sem_linear} with mutually independent noises of common variance $\sigma^2$. For every $i \in [d]$,
\begin{equation}
    \mathrm{Var_{lin}}\big(X_i \mid X_{-i}\big) \;=\; \frac{1}{\Omega_{ii}} \;=\; \frac{\sigma^2}{1 + \sum_{j \in \mathrm{CH}(i)} \Lambda_{ji}^2},
    \label{eq:cond-variance}
\end{equation}
so that $\mathrm{Var_{lin}}(X_i \mid X_{-i}) = \sigma^2$ if $i$ is a sink and $\mathrm{Var_{lin}}(X_i \mid X_{-i}) \le \sigma^2 / (1 + b_{\min}^2) < \sigma^2$ otherwise, where $b_{\min} = \min_{(i,j) \in E} |\Lambda_{ji}|$.
\end{lemma}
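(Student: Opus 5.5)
The plan is to prove the first equality using only second moments, and then read the diagonal of $\bm\Omega$ off the factorization $\bm \Omega = \bm B^\top \bm D^{-1}\bm B$. The final bounds then follow by inspection.

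For the first equality, I use the standard Schur-complement identity for the linear projection. Since $\bm\Sigma = \bm B^{-1}\bm D \bm B^{-\top}$ is positive definite, the minimizer in \eqref{eq:condvar_variational} with $S=-i$ is $\beta^\ast = \bm\Sigma_{-i,-i}^{-1}\bm\Sigma_{-i,i}$. The minimal value is therefore $\Sigma_{ii}-\bm\Sigma_{i,-i}\bm\Sigma_{-i,-i}^{-1}\bm\Sigma_{-i,i}$. By the block-inverse formula this quantity equals $1/(\bm\Sigma^{-1})_{ii}=1/\Omega_{ii}$. The argument needs only finite second moments and no Gaussianity, which is why the statement uses $\mathrm{Var_{lin}}$.

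For the second equality, write $\Omega_{ii} = \sigma^{-2}\sum_{k} B_{ki}^2$, i.e.\ $\sigma^{-2}$ times the squared norm of the $i$-th column of $\bm B=\bm I-\bm\Lambda$. The entries of this column are as follows:
\begin{itemize}
\item $B_{ii}=1$, since $\Lambda_{ii}=0$ (a DAG has no self-loops);
\item $B_{ki}=-\Lambda_{ki}$ for $k\neq i$, which is nonzero exactly when $i\in\mathrm{PA}(k)$, i.e.\ $k\in\mathrm{CH}(i)$.
\end{itemize}
Hence $\Omega_{ii}=\sigma^{-2}\bigl(1+\sum_{j\in\mathrm{CH}(i)}\Lambda_{ji}^2\bigr)$, and inverting gives \eqref{eq:cond-variance}.

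For the consequences, consider the two cases. If $i$ is a sink, the sum is empty and the value is $\sigma^2$. Otherwise some child $j$ exists with $|\Lambda_{ji}|\ge b_{\min}>0$, so the denominator is at least $1+b_{\min}^2$. This gives the bound $\sigma^2/(1+b_{\min}^2)<\sigma^2$.

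I do not expect a real obstacle. The only point needing care is the first step: justifying that the variational minimum equals the Schur complement, and hence $1/\Omega_{ii}$, without assuming Gaussianity. This reduces to the normal equations for least squares together with invertibility of $\bm\Sigma_{-i,-i}$, which holds because $\bm\Sigma$ is positive definite.
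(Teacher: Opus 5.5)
Your proposal is correct and follows the paper's proof step for step: compute $\bm\Omega = \sigma^{-2}(\bm I-\bm\Lambda)^\top(\bm I-\bm\Lambda)$, use the fact that the minimal linear-prediction error equals $1/\Omega_{ii}$, and read $\Omega_{ii}$ off the squared norm of the $i$-th column of $\bm I-\bm\Lambda$. The only differences are that you justify the $1/\Omega_{ii}$ identity through the normal equations and the Schur complement, which the paper simply cites as standard, and that you spell out the sink and non-sink bounds, which the paper leaves implicit.
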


For proofs, see Appendix~\ref{app:proofs}. This fact uses just two properties of the variance functional: (i) a regression whose covariates are non-descendants, and whose covariate set contains the parents, leaves a pure-noise residual, and no linear predictor can do better; and (ii) when a variable is regressed on a set containing its children, the children carry information about its noise, and the objective registers this by dropping strictly below the pure-noise value. Property (i) generalizes to any density with finite entropy, and (ii) generalizes only under a regularity condition.

\subsection{Conditional Risks}
\label{ssec:cond-risk}

Throughout this section, the noise variables $N_1, \dots, N_d$ are assumed independent and identically distributed, each with known density $f$; this \emph{equal-distribution} assumption is the natural generalization of the equal-variance assumption of~\citet{peters2014identifiability}, to which it reduces when $f$ is Gaussian. Let $\rho_f = -\log f$ denote the associated negative log-likelihood loss. This lets us inform the conditional risk objective~\eqref{eq:cond-risk} with our knowledge of the noise density, while the conditional variance objective~\eqref{eq:condvar_variational} remains only second-moment informed. 

\begin{definition}[Conditional risk]
\label{def:cond_risk}
For $S \subset [d]$ and $i \notin S$, the conditional risk of $X_i$ given $X_S$ is
\begin{equation}
    \mathcal{R}_f \big(X_i \mid X_S\big) \;=\; \min_{\beta \in \mathbb{R}^{|S|}} \; \mathbb{E} \left[\rho_f\big(X_i - \beta^\top X_S\big) \right],
    \label{eq:cond-risk}
\end{equation}
with the convention $\mathcal{R}_f(X_i \mid X_\emptyset) = \mathbb{E}\left[\rho_f(X_i) \right]$.
\end{definition}

In each case, $\rho_f$ is the loss under which linear regression is maximum-likelihood estimation for the assumed noise.

\begin{table}[ht]
\centering
\small
\begin{tabular}{@{}ll@{}}
\toprule
\textbf{Noise distribution} & $\rho_f(u)=-\log f(u)$\\
\midrule
Standard Normal
  & $\displaystyle \frac{u^2}{2} + C$ \\[3pt]
Mean-zero Laplace with scale $s$
  & $\displaystyle \frac{|u|}{s} + \log(2s)$ \\[3pt]
Student-$t_\nu$
  & $\displaystyle \frac{\nu+1}{2}
      \log\!\left(1+\frac{u^2}{\nu}\right) + C$ \\[5pt]
Mean-zero unit exponential
  & $\displaystyle
    \begin{cases}
      u+1, & u\geq -1,\\
      +\infty, & u<-1
    \end{cases}$ \\
\bottomrule
\end{tabular}
\caption{Noise distributions and their corresponding matched losses.}
\label{tab:matched_losses}
\end{table}

Define the \emph{pure-noise risk}
\begin{equation}
    c_f \;=\; \mathbb{E} \left[ \rho_f(N_i) \right] \;=\; -\mathbb{E} \left[ \log f(N_i) \right] \;=\; h(f),
    \label{eq:pure_noise_risk}
\end{equation}
which is the differential entropy of the noise, assumed finite throughout. This constant plays the role that $\sigma^2$ plays in the classical setting: it is the risk value that certifies a residual as being pure noise.

\subsection{The Top-Down Direction}
\label{ssec:top-down}

The top-down generalization relies on a risk decomposition that applies whenever a variable is regressed on its non-descendants and makes explicit the role of correctly specifying \(f\). 

\begin{lemma}[Source-node identification]
\label{lem:certificate}
Consider the model \eqref{eq:sem_linear} with i.i.d.\ noise of density $f$, and let $\rho_f = -\log f$. Let $i \in [d]$ and let $S$ consist of non-descendants of $i$. Then for any $\beta$, there exists $Z_{i, \beta, S}$  independent of $N_i$ such that $X_i - \beta^\top X_S = N_i + Z_{i, \beta, S}$. We write \(Z_\beta\) as short for \(Z_{i, \beta, S}\) with the \(i \text{ and } S\) being clear from context, and \(g_\beta\) as the density of \(N_i + Z_\beta\). Finally, 
\begin{equation}
\begin{aligned}
    \mathcal{R}_f \big(X_i \mid X_S\big) 
    &= \min_{\beta\in\mathbb R^{|S|}} \mathbb{E}\left [\rho_f \big(X_i - \beta^\top X_S\big) \right] \\
    &= \min_{\beta\in\mathbb R^{|S|}} \left\{ h\big(g_\beta\big) + \mathrm{KL}\big(g_\beta \,\big\|\, f\big) \right\} \\
    &\ge h(f) \\
    &= c_f,
\end{aligned}
\label{eq:risk_decomposition}
\end{equation}
with equality if and only if $Z_{\beta} = 0$. Consequently: if $\mathrm{PA}(i) \subseteq S$, then $\mathcal{R}_f(X_i \mid X_S) = c_f$, attained exactly at the coefficient vector supported on $\mathrm{PA}(i)$ with the true weights; and if $\mathrm{PA}(i) \not\subseteq S$ and $S$ is ancestral, then $\mathcal{R}_f(X_i \mid X_S) > c_f$.
\end{lemma}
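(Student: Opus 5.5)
We will prove the statement in three stages: the structural decomposition $X_i - \beta^\top X_S = N_i + Z_\beta$, the entropy/KL identity with its lower bound, and the equality characterization together with its two consequences.

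\textbf{Decomposition.} Fix a topological ordering. Writing $X = \bm B^{-1} N$, every $X_j$ is a linear combination of the noises $N_k$ with $k \in \mathrm{AN}(j) \cup \{j\}$. Since every $j \in S$ is a non-descendant of $i$, no $X_j$ with $j \in S$ involves $N_i$. Likewise, $X_i = \sum_{k \in \mathrm{PA}(i)} \Lambda_{ik} X_k + N_i$, and each parent is a non-descendant of $i$, so its expansion also excludes $N_i$. Hence
\[
Z_\beta := \sum_{k \in \mathrm{PA}(i)} \Lambda_{ik} X_k - \beta^\top X_S
\]
is a linear function of $\{N_k : k \neq i\}$, which makes it independent of $N_i$ by mutual independence.

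\textbf{Identity and bound.} Because $N_i$ has density $f$ and is independent of $Z_\beta$, the sum $N_i + Z_\beta$ has density $g_\beta = f * \mu_{Z_\beta}$, where $\mu_{Z_\beta}$ is the law of $Z_\beta$. The cross-entropy identity gives
\[
\mathbb{E}_{g_\beta}[-\log f] = h(g_\beta) + \mathrm{KL}(g_\beta \,\|\, f),
\]
which holds on the extended reals once $h(g_\beta)$ is shown to be finite or $+\infty$ handled by convention. For the lower bound:
\begin{itemize}
\item $\mathrm{KL}(g_\beta \,\|\, f) \ge 0$.
\item Adding an independent variable does not decrease differential entropy: $h(N_i + Z_\beta) \ge h(N_i + Z_\beta \mid Z_\beta) = h(N_i) = h(f)$, since conditioning reduces entropy.
\end{itemize}
Taking the minimum over $\beta$ yields $\mathcal{R}_f(X_i \mid X_S) \ge c_f$.

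\textbf{Equality case.} Equality forces both $\mathrm{KL}(g_\beta \,\|\, f) = 0$, i.e.\ $f * \mu_{Z_\beta} = f$, and $I(N_i + Z_\beta ; Z_\beta) = 0$. The cleanest route is through the first condition. Taking characteristic functions gives $\varphi_f \, \varphi_Z = \varphi_f$. Since $\varphi_f(0) = 1$ and $\varphi_f$ is continuous, $\varphi_Z = 1$ on a neighbourhood of $0$. By itself this does not force $Z_\beta = 0$, because some characteristic functions vanish outside intervals. I will instead argue directly: if $N + Z \overset{d}{=} N$ with $Z$ independent of $N$, then iterating with i.i.d.\ copies gives $N + Z_1 + \cdots + Z_n \overset{d}{=} N$ for all $n$, so the family $\{Z_1 + \cdots + Z_n\}$ is tight. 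A random walk is tight only if its steps are degenerate at $0$, so $Z = 0$ a.s. Alternatively, the mutual-information condition says $N_i + Z_\beta$ is independent of $Z_\beta$, and independence of $N + Z$ and $Z$ with $N \perp Z$ forces $Z$ to be constant; a constant is then $0$ by the KL condition, since $f(\cdot - c) = f$ forces $c = 0$ for an integrable density. I expect this equality step to be the main technical obstacle; the tightness argument is what I would try first.

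\textbf{Consequences.} If $\mathrm{PA}(i) \subseteq S$, choosing $\beta$ equal to the true weights on $\mathrm{PA}(i)$ and zero elsewhere makes $Z_\beta = 0$, so the minimum equals $c_f$. Uniqueness of this minimizer follows from the equality characterization, provided the covariates $X_S$ are linearly independent. This holds because $\operatorname{Cov}$ is nondegenerate, or more directly because $X_S$ is an invertible transform of independent nondegenerate noises.

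If $S$ is ancestral and some $p \in \mathrm{PA}(i) \setminus S$, then $Z_\beta$ contains the term $\Lambda_{ip} X_p$, whose expansion includes $N_p$ with coefficient $\Lambda_{ip} \neq 0$. Because $S$ is ancestral and $p \notin S$, no element of $S$ has $p$ as an ancestor or equals $p$, so no $X_j$ with $j \in S$ involves $N_p$. Hence $Z_\beta \neq 0$ for every $\beta$. To upgrade the pointwise strict inequality to a strict inequality for the minimum, two routes are available:
\begin{itemize}
\item Show the minimum is attained: the risk tends to $\infty$, or at least stays bounded away from $c_f$, as $\|\beta\| \to \infty$, and is lower semicontinuous in $\beta$.
\item Use the explicit $N_p$ component: $Z_\beta = \Lambda_{ip} N_p + W_\beta$ with $W_\beta \perp N_p$, so $h(g_\beta) \ge h(N_i + \Lambda_{ip} N_p) > h(f)$ uniformly in $\beta$. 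This uses the same conditioning argument together with the strict inequality for a fixed nondegenerate addition.
\end{itemize}
The second route is preferable because it avoids any compactness argument.
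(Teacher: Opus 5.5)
Your structure matches the paper's proof: the non-descendant decomposition, cross-entropy $=h+\mathrm{KL}$, monotonicity of entropy under independent addition, and, for the ancestral case, a $\beta$-uniform bound $h(g_\beta)\ge h(N_i+\Lambda_{ip}N_p)$ obtained by conditioning. \textbf{The gap} is in the last step, which fails as written because you choose $p\in\mathrm{PA}(i)\setminus S$ arbitrarily. You correctly rule out $N_p$ entering through $X_S$, but you overlook the \emph{other} missing parents. If some $k\in\mathrm{PA}(i)\setminus S$ is a descendant of $p$, then $\Lambda_{ik}X_k$ also carries $N_p$. The coefficient of $N_p$ in $Z_\beta$ is then the total effect of $p$ on $i$ (a sum of path products), not $\Lambda_{ip}$.

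Since faithfulness is not assumed, this sum can vanish. Take edges $p\to k$, $p\to i$, $k\to i$, $S=\emptyset$ and $\Lambda_{ip}+\Lambda_{ik}\Lambda_{kp}=0$. Then $Z=\Lambda_{ik}N_k$, so $N_p$ is absent and your decomposition $Z_\beta=\Lambda_{ip}N_p+W_\beta$ with $W_\beta\perp N_p$ is false. The conclusion survives only for a well-chosen $p$. The repair is the paper's choice: let $p$ be the \emph{last} vertex of $\mathrm{PA}(i)\setminus S$ in a topological order. Then no other missing parent descends from $p$, and, as you argued, no vertex of $S$ does either. So $N_p$ enters $Z_\beta$ only through $\Lambda_{ip}X_p$, with coefficient exactly $\Lambda_{ip}\neq 0$. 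Your second route then goes through. The strict inequality $h(N_i+\Lambda_{ip}N_p)>h(f)$ follows from the entropy power inequality, $e^{2h(N_i+\Lambda_{ip}N_p)}\ge(1+\Lambda_{ip}^2)e^{2h(f)}$, or from your own mutual-information argument.

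Two smaller points. First, your dismissal of the characteristic-function route is mistaken. $\mathbb{E}\,e^{itZ_\beta}=1$ for a single $t\neq 0$ already gives $\cos(tZ_\beta)=1$, i.e.\ $tZ_\beta\in 2\pi\mathbb{Z}$ almost surely. So equality to $1$ on a neighbourhood of $0$ forces $Z_\beta=0$ directly, since two incommensurable values of $t$ suffice. This is shorter than the tightness argument, which is itself usually proved this way. Second, for uniqueness of the minimizer, do not invoke a nondegenerate covariance: no second moments are assumed (e.g.\ Cauchy noise). Your alternative is the right justification: a nontrivial linear combination of independent noises with densities cannot vanish almost surely. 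Apart from the gap above, your equality-case and uniqueness arguments are more explicit than the paper's, which simply asserts the equality condition for entropy.
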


Hence, \(c_f\) is the lower bound as long as one regresses \(X_i\) only on non-descendants. The proof needs only \( |h(f)| < \infty\) as mentioned in~\eqref{eq:pure_noise_risk}, no moments of \(f\), no smoothness or no support conditions. The top-down procedure now follows cleanly: grow an ancestral set by repeatedly selecting a remaining variable of minimal risk given the current set.

\begin{theorem}[Population correctness, top-down]
\label{thm:top-down}
Under the model \eqref{eq:sem_linear} with i.i.d.\ noise of density $f$ with $|h(f)| < \infty$, initialize $S_1 = \emptyset$ and for $k = 1, \dots, d$ select
\begin{equation}
    t_k \;\in\; \argmin_{i \notin S_k} \; \mathcal{R}_f\big(X_i \mid X_{S_k}\big),
    \qquad S_{k+1} = S_k \cup \{t_k\}.
\end{equation}
Then $(t_1, \dots, t_d)$ is a topological ordering of $\mathcal{G}$, and at each step the minimizing coefficient vector for the selected $t_k$ is supported exactly on $\mathrm{PA}(t_k)$, with values equal to the corresponding entries of $\Lambda$.
\end{theorem}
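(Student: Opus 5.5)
The plan is an induction on $k$, with the invariant that $S_k$ is an ancestral set (closed under taking parents) and that $(t_1,\dots,t_{k-1})$ lists its elements in an order compatible with $\mathcal G$. The base case $S_1=\emptyset$ is trivially ancestral. Everything reduces to showing that at step $k$ the argmin consists exactly of the remaining vertices all of whose parents lie in $S_k$. Adding any such vertex to $S_k$ keeps the set ancestral, and it places that vertex after all its parents. After $d$ steps this yields a topological ordering.

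Fix $k$ with $S_k$ ancestral. The first task is to apply Lemma~\ref{lem:certificate} to every remaining $i\notin S_k$. This needs $S_k$ to consist of non-descendants of $i$. If some $j\in S_k$ were a descendant of $i$, ancestrality of $S_k$ would force $i\in S_k$, since every ancestor of $j$ lies in $S_k$; that is a contradiction. So the lemma applies to every remaining $i$, and it gives two cases:
\begin{itemize}
\item if $\mathrm{PA}(i)\subseteq S_k$, then $\mathcal R_f(X_i\mid X_{S_k})=c_f$;
\item if $\mathrm{PA}(i)\not\subseteq S_k$, then, since $S_k$ is ancestral, $\mathcal R_f(X_i\mid X_{S_k})>c_f$.
\end{itemize}
We also need the first case to be nonempty. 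The induced subgraph on $V\setminus S_k$ is a nonempty DAG, so it has a source $i$, and all parents of $i$ lie in $S_k$. Hence the minimum risk is exactly $c_f$, and every minimizer $t_k$ satisfies $\mathrm{PA}(t_k)\subseteq S_k$. Then $S_{k+1}$ is ancestral and the induction closes. Finiteness of $h(f)$ is what makes the comparison $c_f<\mathcal R_f$ meaningful, since it rules out $\infty$ versus $\infty$ comparisons.

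For the coefficient claim, take the selected $t_k$, which has $\mathrm{PA}(t_k)\subseteq S_k$ and $S_k$ consisting of non-descendants. Lemma~\ref{lem:certificate} states that the risk equals $c_f$ if and only if $Z_\beta=0$, and that the minimum is attained at the true-weight vector. For uniqueness, write
\[
Z_\beta=\sum_{j\in\mathrm{PA}(t_k)}\Lambda_{t_kj}X_j-\beta^\top X_{S_k}=\gamma^\top X_{S_k},
\]
where $\gamma$ is the true coefficient vector padded with zeros minus $\beta$. We then argue that $X_{S_k}$ has no degenerate linear combination. Since $X=\bm B^{-1}N$ with $\bm B^{-1}$ invertible, any nonzero linear combination of the components of $X$ is a nonzero linear combination of the independent noises, and it has a nontrivial coefficient on some $N_j$. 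A nondegenerate variable with a density cannot be a.s.\ constant, so such a combination is not a.s.\ zero. Hence $Z_\beta=0$ a.s.\ forces $\gamma=0$: the minimizer is unique, supported on $\mathrm{PA}(t_k)$, and equal to the corresponding entries of $\Lambda$.

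The main obstacle is the non-descendant and uniqueness bookkeeping. The inequalities themselves are supplied by Lemma~\ref{lem:certificate}. The point that needs care is the argument that "equality iff $Z_\beta=0$" pins down $\beta$ uniquely, which requires the linear independence of $X_{S_k}$ as random variables. I would handle it through the noise representation above.
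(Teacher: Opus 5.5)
Your proposal is correct and follows the paper's proof: induction on the invariant that $S_k$ is ancestral, existence of a remaining vertex with all parents in $S_k$, and Lemma~\ref{lem:certificate} to separate such vertices (risk exactly $c_f$) from those with a missing parent (risk strictly above $c_f$). You also make explicit two points the paper leaves implicit, and both are sound: an ancestral $S_k$ consists of non-descendants of every remaining vertex, and the minimizer is unique because any nonzero $\gamma^\top X_{S_k}$ has a nonzero coefficient on some independent $N_j$ and therefore has a density, so it cannot vanish almost surely.
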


\subsection{The Bottom-Up Direction: Regularity Enters}
\label{sec:bottom-up}

The bottom-up procedure constructs the topological ordering in reverse by repeatedly selecting, among the remaining variables, the one with maximal full-conditional risk and removing it. The key distinction from the top-down setting is that, for a sink, all remaining variables are non-descendants, so Lemma~\ref{lem:certificate} applies and its full-conditional risk equals the baseline noise risk $c_f$. For a non-sink, however, the covariate set contains at least one child, which carries information about the node's own noise and can therefore reduce its conditional risk below $c_f$.  Establishing this strict reduction requires additional regularity of the noise density $f$, in contrast to the top-down argument, which relies only on finite differential entropy. Once this separation between sinks and non-sinks is established, maximizing the full-conditional risk identifies a sink at each iteration and yields a topological ordering in reverse.

\begin{lemma}[Stability under sink removal]
\label{lem:sink-removal}
Let $i$ be a sink of $\mathcal{G}$. The joint distribution of $X_{-i} = (X_j)_{j \neq i}$ is generated by the linear SEM whose graph is $\mathcal{G}$ with $i$ removed and whose noises are the original $(N_j)_{j \neq i}$, still i.i.d.\ with density $f$.
\end{lemma}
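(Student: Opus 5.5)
The plan is to read the claim directly off the structural equations. Since $i$ is a sink, $\Lambda_{ji} = 0$ for every $j$, so the variable $X_i$ never appears on the right-hand side of any other equation. Thus for each $j \neq i$,
\[
X_j = \sum_{k \in \mathrm{PA}(j)} \Lambda_{jk} X_k + N_j, \qquad \mathrm{PA}(j) \subseteq [d]\setminus\{i\}.
\]
The subsystem indexed by $V\setminus\{i\}$ is therefore a self-contained linear SEM. Its coefficient matrix $\bm\Lambda_{-i,-i}$ is the submatrix obtained by deleting row and column $i$. Its graph is $\mathcal{G}$ with $i$ (and its incoming edges) removed, and it is driven only by $(N_j)_{j\neq i}$.

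The key steps, in order, are as follows.

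\textbf{Step 1: the subsystem is a valid SEM.} I first check that the submatrix is still strictly lower triangular after permutation. Take a topological ordering of $\mathcal{G}$ and delete $i$; the result is a topological ordering of the induced subgraph. Hence $\bm I - \bm\Lambda_{-i,-i}$ is invertible.

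\textbf{Step 2: solve the subsystem in terms of the noises.} Solving gives $X_{-i} = (\bm I - \bm\Lambda_{-i,-i})^{-1} N_{-i}$, so $X_{-i}$ is a deterministic linear function of $N_{-i}$ alone. To confirm this agrees with the original solution $X = \bm B^{-1}N$, I will note the following. Order the coordinates so that $i$ comes last (valid since $i$ is a sink). Then $\bm B$ is block lower triangular with $\bm B_{-i,i}=0$. Consequently the upper-left block of $\bm B^{-1}$ is $(\bm B_{-i,-i})^{-1}$, and the column of $\bm B^{-1}$ that multiplies $N_i$ vanishes in the rows indexed by $-i$.

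\textbf{Step 3: the noise distribution is unchanged.} The distribution of $X_{-i}$ is the pushforward of the law of $N_{-i}$, a subvector of independent variables. Any subcollection of i.i.d.\ variables with density $f$ is again i.i.d.\ with density $f$, so the noise assumptions are preserved.

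I expect no real obstacle here. The only point needing care is Step 2: verifying that $N_i$ does not enter $X_{-i}$. This is exactly where the sink property is used, and for a non-sink the claim would fail.
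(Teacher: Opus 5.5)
Your proposal is correct and follows the same route as the paper. Since $i$ has no children, $X_i$ appears in no other structural equation, so deleting its equation leaves a self-contained linear SEM on $V\setminus\{i\}$ driven by $(N_j)_{j\neq i}$; your Steps 1--3 (acyclicity of the induced subgraph, the block-triangular form of $\bm B$ showing $N_i$ does not enter $X_{-i}$, and preservation of the i.i.d.\ noise law) make explicit what the paper's one-line argument leaves implicit.
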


For a sink $i$, all other variables are non-descendants, so Lemma~\ref{lem:certificate} applies with $S = [d] \setminus \{i\}$ and gives $\mathcal{R}_f(X_i \mid X_{-i}) = c_f$. Moreover, for every vertex $j$, the coefficient vector placing the true parental weights on $\mathrm{PA}(j)$ and zero elsewhere is feasible and produces the residual $N_j$, so all nodes satisfy
\begin{equation}
    \mathcal{R}_f\big(X_j \mid X_{-j}\big) \;\le\; c_f
    \qquad \text{for every } j.
    \label{eq:free-upper-bound}
\end{equation}
As sinks satisfy the equality, they are always among the maximizers of the full-conditional risk. 

The bottom-up procedure needs the non-sinks to have a full conditional risk strictly below $c_f$ for the order to be identifiable. The covariate set of a non-sink contains its children, whose noise terms are not independent of the residual, so the decomposition \eqref{eq:risk_decomposition} no longer applies. Intuitively, a child is a noisy measurement of the parent $X_j$~'s noise, and regressing on it should explain part of $N_j$ away;  the question is: does using the child as a predictor actually decrease the expected $\rho$-risk? Unlike the top-down method, the answer requires something of the density. We impose the following regularity:

\begin{itemize}
    \item[(R1)] $f$ is absolutely continuous with a.e.\ derivative $f'$, $\mathbb{E}|N| < \infty$ for $N \sim f$, the score $\psi_f = \rho_f' = -f'/f$ satisfies $\mathbb{E}\left[ \psi_f(N_i)\right] = 0 $ and $\mathbb{E}\big[N_i\psi_f(N_i)\big] = 1$, and for independent $N_i, N_j \sim f$ and $b \neq 0$ the map $\varphi_f(\varepsilon) = \mathcal{R}_f\big(X_i \mid X_{-i}; \varepsilon\big) =  \mathbb{E}\left[ \rho_f\big((1 - \varepsilon b) N_i - \varepsilon N_j\big) \right]$ is differentiable at $\varepsilon = 0$ with derivative obtained by differentiating under the expectation.
\end{itemize}

The moment identities follow by integration by parts whenever $uf(u) \to 0$ as $|u| \to \infty$: $\int \psi_f f = -\int f' = 0$ and $\int u\,\psi_f(u) f(u)\,du = -\int u f'(u)\,du = \int f = 1$. The differentiation clause holds by dominated convergence if $\psi_f$ is a.e.\ continuous with $|\psi_f(u)| \le C(1+|u|)^k$ and $\mathbb{E}|N|^{k+1} < \infty$; this covers Gaussian ($k=1$), Laplace, and Student-$t_\nu$ with $\nu > 1$ ($k=0$).

\begin{lemma}[Strict risk reduction at non-sinks]
\label{lem:strict-gap}
Assume (R1). If $j$ is not a sink of $\mathcal{G}$, then
\begin{equation}
    \mathcal{R}_f\big(X_j \mid X_{-j}\big) \;<\; c_f.
\end{equation}
\end{lemma}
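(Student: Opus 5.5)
Since $j$ is not a sink, it has a child $k\in\mathrm{CH}(j)$. We will exhibit a single feasible coefficient vector whose risk is strictly below $c_f$. The idea is to perturb the ``true-parent'' predictor in the direction of the child's residual.

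Start from $\beta^{(0)}$, which places the true weights $\Lambda_{j\cdot}$ on $\mathrm{PA}(j)$ and zero elsewhere; its residual is $X_j-\beta^{(0)\top}X_{-j}=N_j$. The child's structural residual is
\begin{equation*}
X_k-\sum_{l\in\mathrm{PA}(k)}\Lambda_{kl}X_l=N_k ,
\end{equation*}
and it is a linear function of $X_{-j}$ plus $\Lambda_{kj}X_j$. Let $b=\Lambda_{kj}\neq 0$. For $\varepsilon\in\mathbb{R}$, define the predictor
\begin{equation*}
\beta^\top X_{-j}=\beta^{(0)\top}X_{-j}+\varepsilon\Big(X_k-\sum_{l\in\mathrm{PA}(k)\setminus\{j\}}\Lambda_{kl}X_l\Big)-\varepsilon b\,\beta^{(0)\top}X_{-j}.
\end{equation*}
The goal is to choose this perturbation so that the residual becomes $(1-\varepsilon b)N_j-\varepsilon N_k$. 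Concretely, the covariate combination $X_k-\sum_{l\in\mathrm{PA}(k)\setminus\{j\}}\Lambda_{kl}X_l$ equals $bX_j+N_k$. Substituting $X_j=\beta^{(0)\top}X_{-j}+N_j$, the residual of the perturbed predictor is linear in $N_j,N_k$ and equal to $(1-\varepsilon b)N_j-\varepsilon N_k$; the remaining step is just to fix the signs correctly. The only requirement is that all terms lie in the span of $X_{-j}$, which holds because $k\neq j$ and the parents of $k$ other than $j$ are also $\neq j$. Hence $\mathcal{R}_f(X_j\mid X_{-j})\le\varphi_f(\varepsilon)$ for every $\varepsilon$, with $\varphi_f(0)=c_f$, and $N_j,N_k$ are i.i.d. with density $f$, exactly as in (R1).

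Next we differentiate. By (R1),
\begin{equation*}
\varphi_f'(0)=\mathbb{E}\big[\psi_f(N_j)(-bN_j-N_k)\big]=-b\,\mathbb{E}[N_j\psi_f(N_j)]-\mathbb{E}[N_k]\,\mathbb{E}[\psi_f(N_j)].
\end{equation*}
This uses independence of $N_j$ and $N_k$, together with $\mathbb{E}|N|<\infty$ to split the expectation. The moment identities in (R1) then give $\varphi_f'(0)=-b\neq 0$. So for small $\varepsilon$ of the same sign as $b$ we get $\varphi_f(\varepsilon)<\varphi_f(0)=c_f$, which yields the strict inequality. Note that this route also does not need the mean of $N$ to vanish, since $\mathbb{E}\psi_f=0$ already kills the cross term.

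The main obstacle is the construction in the first step. One must check carefully that the perturbed residual is exactly $(1-\varepsilon b)N_j-\varepsilon N_k$ as a function of covariates in $X_{-j}$, so that it matches the map $\varphi_f$ whose differentiability (R1) assumes. If the algebra forces a residual of the form $(1+\varepsilon b)N_j+\varepsilon N_k$ instead, the same derivative argument applies after flipping the sign of $\varepsilon$; by symmetry of the form of $\varphi_f$ this does not matter. The analytic part, differentiating under the expectation, is delegated entirely to (R1).
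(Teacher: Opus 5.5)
Your proof is correct and follows the paper's argument: your perturbation term reduces to $\varepsilon(bN_j+N_k)$, which is the paper's combination $W=bN_j+N_c$ in the span of $X_{-j}$, and your computation $\varphi_f'(0)=-b$ uses the same (R1) identities and independence. The hedging about signs is unnecessary, because substituting $X_j=\beta^{(0)\top}X_{-j}+N_j$ into your predictor already gives the residual $(1-\varepsilon b)N_j-\varepsilon N_k$ exactly.
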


The existence of the non-sink conditional risk less than \(c_f\) is distribution-free once it satisfies the regularity condition, with the noise distribution entering only for the conditional risk gap between the sink and the non-sink. Assume $\varphi_f$ is twice differentiable at $0$ with $\varphi_f''(0) = \kappa_b = b^2\, \mathbb{E}\big[\psi_f'(N_i) N_i^2\big] + \sigma^2\, \mathbb{E}\,\big[\psi_f'(N_i)\big]$, where \(b\) is the edge weight connecting \(j\) to its child. Then approximating the non-sink's conditional risk with the Taylor expansion and subtracting from the sink's conditional risk (\(c_f\)) yields the gap
\begin{equation}
    c_f \;-\; \mathcal{R}_f\big(X_j \mid X_{-j}\big) \;\ge\; \frac{b^2}{2 \kappa_b} \;+\; o(b^2),
    \label{eq:quantitative-gap}
\end{equation}
which allows us to recover the correct ordering empirically even with sample errors. As a sanity check, for standard Gaussian noise $\psi_f'(u) \equiv 1$, so $\kappa_b = 1 + b^2$ and \eqref{eq:quantitative-gap} reads $b^2/(2(1+b^2))$---exactly the gap delivered by Lemma~\ref{lem:ls_sink}.

\begin{theorem}[Population correctness, bottom-up]
\label{thm:bottom-up}
Under the model \eqref{eq:sem_linear} with i.i.d.\ noise of density $f$ satisfying (R1), initialize $V_1 = [d]$ and for $k = 1, \dots, d$ select
\begin{equation}
    t_k \;\in\; \argmax_{i \in V_k} \; \mathcal{R}_f\big(X_i \mid X_{V_k \setminus \{i\}}\big),
    \qquad V_{k+1} = V_k \setminus \{t_k\}.
\end{equation}
Then $(t_d, \dots, t_1)$ is a topological ordering of $\mathcal{G}$, and at each step the minimizing coefficient vector for the selected $t_k$ is supported exactly on $\mathrm{PA}(t_k)$, with values equal to the corresponding entries of $\Lambda$.
\end{theorem}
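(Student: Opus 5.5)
The plan is induction on $k$, carrying the invariant that the distribution of $X_{V_k}$ is generated by a linear SEM on the induced subgraph $\mathcal{G}[V_k]$, with the original noises $(N_j)_{j\in V_k}$ still i.i.d.\ with density $f$, and with the original edge weights $\Lambda_{ji}$ for $i,j\in V_k$. For $k=1$ this is the model itself.

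For the inductive step, assume the invariant at stage $k$. We must show three things: the selected $t_k$ is a sink of $\mathcal{G}[V_k]$, its minimizing coefficients are its true parental weights, and the invariant passes to $V_{k+1}$.

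\emph{First, the selected node is a sink.} Let $\mathcal{G}_k=\mathcal{G}[V_k]$. Applying \eqref{eq:free-upper-bound} to the SEM on $V_k$ gives $\mathcal{R}_f(X_i\mid X_{V_k\setminus\{i\}})\le c_f$ for every $i\in V_k$.
\begin{itemize}
\item For a sink $i$ of $\mathcal{G}_k$, every other node of $V_k$ is a non-descendant. Lemma~\ref{lem:certificate} then gives equality, $\mathcal{R}_f = c_f$.
\item For a non-sink $i$ of $\mathcal{G}_k$, Lemma~\ref{lem:strict-gap} applied to the SEM on $V_k$ gives strict inequality, $\mathcal{R}_f < c_f$. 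Condition (R1) concerns only $f$ and one nonzero edge weight $b$, and $f$ is unchanged, so (R1) still holds.
\end{itemize}
Every DAG has a sink, so the maximum equals $c_f$ and is attained only at sinks. Hence $t_k$ is a sink of $\mathcal{G}_k$.

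\emph{Second, the coefficients are the true weights.} Since every other node is a non-descendant of $t_k$, Lemma~\ref{lem:certificate} states that the minimizer is unique: it is the vector supported on $\mathrm{PA}_{\mathcal{G}_k}(t_k)$ with the true weights. The induced subgraph keeps every parent of $t_k$ in $V_k$, because any node removed earlier was a sink at its removal stage, and a sink cannot be a parent of a node still present. So $\mathrm{PA}_{\mathcal{G}_k}(t_k)=\mathrm{PA}(t_k)$, with entries equal to those of $\Lambda$.

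\emph{Third, the invariant propagates.} Lemma~\ref{lem:sink-removal} gives it for $V_{k+1}=V_k\setminus\{t_k\}$.

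\emph{Ordering.} Each $t_k$ has no children in $V_k$, so all its children were removed earlier and carry smaller indices. Reversing the sequence, $(t_d,\dots,t_1)$ places every parent before each of its children, so it is a topological ordering.

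\emph{Main obstacle.} The delicate point is that Lemma~\ref{lem:strict-gap} is stated for the full graph $\mathcal{G}$. We must check that it applies verbatim to the reduced SEM: i.i.d.\ noise with density $f$, a strictly lower-triangular weight matrix after permutation, and nonzero weights on the edges. Lemma~\ref{lem:sink-removal} supplies exactly this. A second check is that $|h(f)|<\infty$, needed by Lemma~\ref{lem:certificate}, holds under (R1). The paper assumes finite entropy throughout \eqref{eq:pure_noise_risk}, so I will invoke that standing assumption.
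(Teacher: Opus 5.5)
Your proof is correct and follows the paper's argument. Lemma~\ref{lem:sink-removal} reduces everything to one step on the induced subgraph; there Lemma~\ref{lem:certificate} gives risk exactly $c_f$ at sinks and Lemma~\ref{lem:strict-gap} gives strictly smaller risk at non-sinks, so the argmax is a sink. You also verify two claims the paper's proof leaves implicit: the coefficient claim (no parent of $t_k$ is removed before $t_k$) and that the reversed sequence is a topological ordering.
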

Neither theorem requires faithfulness. Path cancellations never affect the arguments. As with the equal-variance methods, identifiability comes from the noise assumption, not from conditional independences.

\subsection{A Separation Between the Two Directions}
\label{ssec:separation}

The two generalizations are not equally robust. This distinction is obscured in the Gaussian setting, in which the top-down and bottom-up methods have primarily been studied and compared in prior work. The two methods identify an ordering by asking fundamentally different questions. The top-down procedure asks whether a variable can be explained by its non-descendants so that the remaining residual is pure noise. This is a goodness-of-fit question, and the entropy argument of Lemma~\ref{lem:certificate} answers it for every noise density of finite differential entropy. The bottom-up procedure instead asks whether a variable's noise can be partially predicted from its children. This is an estimation question: a non-sink must attain risk strictly below the pure-noise value \(c_f\). Whether the loss registers such a reduction depends on the noise distribution, which is why the bottom-up result requires the additional regularity condition (R1).

The distinction becomes sharp when (R1) fails. For mean-zero exponential noise, every vertex has full-conditional risk \(c_f\), so the bottom-up statistic contains no ordering information, whereas the top-down statistic continues to separate sources from non-sources. For standard Cauchy noise, the bottom-up statistic cannot distinguish the non-sink in the two-node model when \(0<|b|\leq1\), while the top-down statistic identifies the source for every \(b\neq0\). These failures arise, respectively, from information concentrated at a support boundary and from the additive scale geometry of the Cauchy distribution; complete arguments are given in Appendix~\ref{app:failure}. These are limitations of the full-conditional risk statistic, and hence of any ordering procedure built on it; they do not preclude recovery by methods outside this family (for instance, independence-testing approaches in the LiNGAM family, which exploit non-Gaussianity differently).

Further, we find that empirically the top-down method consistently recovers the DAG more accurately than the bottom-up method, as shown in Section~\ref{sec:experiments}. The two procedures schedule their most difficult ordering regressions differently. When \(m\) vertices remain, top-down methods condition on the \(d-m\) previously selected vertices, whereas bottom-up methods condition each candidate on the other \(m-1\) remaining vertices. Bottom-up therefore begins with \(d-1\) predictors while comparing all \(d\) candidates. Top-down begins with marginal risks and reaches its largest conditioning sets only after most of the ordering has already been determined. In the least-squares case, a regression with \(q\) predictors has approximately \(n-q\) residual degrees of freedom, making the early bottom-up comparisons noisier when \(n\) is not large relative to \(d\). An early error can also affect many subsequent recursion steps, whereas a late error has fewer opportunities to propagate. This is a finite-sample mechanism rather than a population-level separation, and a formal analysis is left for future work.

\subsection{Algorithms}
\label{ssec:algorithm}

The empirical procedures replace population risks with their empirical counterparts: for a variable $x_i$ and covariate set $\bm x_A$, the empirical conditional risk is
\begin{align*}
\widehat{\mathcal{R}}_{f}(x_i \mid \bm x_A) = \min_{\beta} \frac{1}{n}\sum_{k=1}^{n}
  \rho_f\big(x_i^{(k)} - \beta^{\top} \bm x_A^{(k)}\big).
\end{align*}

\begin{algorithm}[ht]
\caption{Risk-based top-down DAG recovery}
\label{alg:td}
\begin{algorithmic}[1]
\Require samples \(x^{(1)},\dots,x^{(n)}\); objective \(\rho_f\); threshold \(\omega>0\)
\State \(\widehat{ \bm \Lambda}\gets 0_{d\times d}\)
\State \(S\gets\emptyset\)
\For{\(k=1,\dots,d\)}
    \State \(t_k\gets\arg \min_{i\notin S}\widehat{\mathcal{R}}_f(x_i\mid \bm x_S)\)
    \ForAll{$j \in S$}
        \State $\widehat{\Lambda}_{t_k j} \gets \widehat{\beta}^{(t_k \mid S)}_j$
\EndFor
\EndFor
\State $\widehat{\Lambda}_{ab} \gets 0$ for all $(a,b)$ with $\bigl|\widehat{\Lambda}_{ab}\bigr| \le \omega$
\State \Return \(\widehat{\bm \Lambda}\) and the ordering \( (t_1, \cdots, t_d) \)
\end{algorithmic}
\end{algorithm}

\begin{algorithm}[ht]
\caption{Risk-based bottom-up DAG recovery}
\label{alg:bu}
\begin{algorithmic}[1]
\Require samples \(x^{(1)},\dots,x^{(n)}\); objective \(\rho_f\); threshold \(\omega>0\)
\State \(\widehat{ \bm \Lambda}\gets 0_{d\times d}\)
\State \(V\gets[d]\)
\For{\(k = 1, \cdots, d\)}
    \State
        \(t_k\gets\arg \max_{i\in V}
        \widehat{\mathcal{R}}_f(x_i\mid \bm x_{V\setminus\{i\}})\)
        \ForAll{\(j\in V\setminus\{t_k\}\)}
                \State $\widehat{\Lambda}_{t_k j} \gets \widehat{\beta}^{(t_k \mid V\setminus\{t_k\})}_j$
        \EndFor
        \State \(V\gets V\setminus\{t_k\}\)
\EndFor
\State $\widehat{\Lambda}_{ab} \gets 0$ for all $(a,b)$ with $\bigl|\widehat{\Lambda}_{ab}\bigr| \le \omega$
\State \Return \(\widehat{\bm \Lambda}\) and the ordering \((t_d, \cdots, t_1 )\)
\end{algorithmic}
\end{algorithm}

\section{Experiments}
\label{sec:experiments}

\paragraph{Setup.} We simulate Erd\H{o}s--R\'enyi DAGs (ER-\(k\))~\citep{erdos1960evolution} and scale-free DAGs (SF-\(k\))~\citep{barabasi1999emergence} with \(k \in \{2, 4\}\) and $d = 20$ vertices with expected number of edges $kd$. Edge weights are drawn uniformly from $\pm[0.5, 2]$, and i.i.d.\ noise from one of four distributions: Cauchy, Student-$t_3$, Laplace and mean-zero exponential. Columns are randomly permuted before estimation. We compare the top-down and bottom-up procedures of Algorithms~\ref{alg:td} and \ref{alg:bu} instantiated with the least-squares loss (the classical conditional-variance methods, denoted LS-TD and LS-BU) and with the loss matched to the noise ($\rho$-TD and $\rho$-BU): quantile regression for Laplace, iteratively reweighted least squares for Student-$t$ and Cauchy, and a linear program for the exponential. Scale parameters are treated as known, consistent with our known-$f$ setting, although some of the losses do not require them. We threshold at \(\omega = 0.3\), below the minimum true weight 0.5. As a baseline we include DirectLiNGAM \citep{shimizu2011directlingam, hyvarinen2013pairwise} using the Python package \texttt{lingam}, and \(\rho\)-DirectLiNGAM, which uses knowledge of the noise distribution for both, the ordering and the parent estimation steps.

To keep the comparisons readable, we restrict attention to the baselines most relevant to our setting and omit others for the following reasons. Continuous-optimization methods such as NOTEARS~\citep{zheng2018dags} and DAGMA~\citep{bello2022dagma} were consistently outperformed by LS-TD in our preliminary experiments. OptLiNGAM~\citep{oh2025optimal} is designed for sub-Gaussian tails, which none of our four noise distributions have. ETPIA~\citep{xie2019efficient} estimates entropies from moments up to fourth order, which do not exist for the Cauchy and Student-$t_3$ distributions. Finally, NPVAR~\citep{gao2020polynomial} and the method of \citet{rajendran2021structure} coincide with LS-TD in the linear setting, so including them would duplicate an existing curve.

We report the structural Hamming distance (SHD) to the true DAG, averaged over 50 replications, for $n \in \{50, 100, 200, 400, 800, 1600\}$. Figure~\ref{fig:er4_plot} reports the results for ER-4 random graphs. 

\begin{figure*}[htb]
    \centering
    \includegraphics[width=1\linewidth]{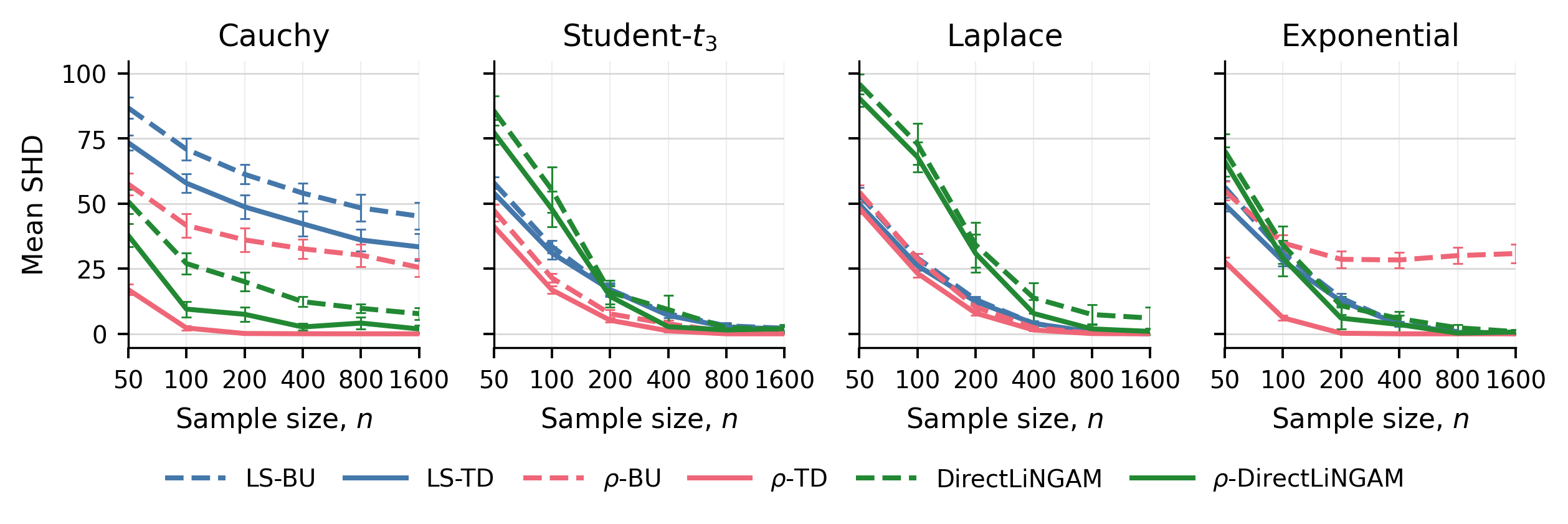}
    \caption{Structural Hamming distance between the true DAG and the learned DAG for different noise distributions, averaged over 50 replications of an ER-4 random graph. LS-BU and LS-TD are the existing least-squares bottom-up and top-down approaches. \(\rho\)-BU and \(\rho\)-TD refer to our matched-loss methods. \(\rho\)-DirectLiNGAM uses knowledge of the noise density for both steps of the method. Error bars represent 95\% confidence intervals.}
    \label{fig:er4_plot}
\end{figure*}

Within either direction, replacing the quadratic loss by the matched loss never hurts (when the regularity condition is satisfied) and helps increasingly as the noise moves towards heavier tails. The improvement is modest for Laplace noise, clear for $t_3$, and substantial for Cauchy. When the noise variance is undefined, the equal-variance premise of the classical methods is vacuous, and the least-squares procedures remain far from the truth (LS-BU still averages SHD $45$ under Cauchy noise at $n = 1600$). 

Even under noise satisfying (R1), the top-down variant is moderately but consistently ahead of the bottom-up variant at matched loss. This is a result of the order in which the two methods conduct their noisiest regressions as discussed in Section~\ref{ssec:separation}. In the two degenerate regimes of Cauchy or exponential noise, the structure recovered by $\rho$-BU remains far from the true DAG even at large sample-sizes, while $\rho$-TD recovers the graph rapidly.

DirectLiNGAM converges under all four settings, but is less sample-efficient than the matched top-down method throughout, especially at small $n$ (e.g., SHD $92$ versus $48$ for Laplace at $n = 50$, and $12$ versus $0$ for Cauchy at $n = 400$). Enhancing DirectLiNGAM by using knowledge of the noise distribution for ordering and parent estimation improves it uniformly but still lags behind \(\rho\)-TD. Results for sparser ($k = 2$) and scale-free graphs are qualitatively similar and deferred to Appendix~\ref{app:results}.

\section{Conclusion}
\label{sec:conclusion}

We revisited the ordering-based approach to learning linear additive noise SEMs and showed that the conditional variance at its core is one member of a family of conditional risks. The generalization revealed an asymmetry obscured by the Gaussian setting. The top-down direction, which asks only whether a residual is distributed as pure noise, is correct for every noise density of finite differential entropy. However, the bottom-up direction, which must detect that a vertex's noise is partially predictable from its children, requires the noise to meet the regularity condition, and can fail without it. The choice of direction, a matter of convenience under Gaussian noise, is a matter of correctness under non-Gaussian noise. Our experiments track this picture closely, and show that losses matched to the noise improve structure recovery over least squares with gains that grow with tail heaviness, to the point of restoring consistency where variance-based methods have none.

We see two main directions for future work. First, our theory concerns the population conditional risk. A finite-sample analysis is the natural next step, including sample complexity guarantees like those already established for variance-based ordering algorithms \citep{ghoshal2018learning, gao2022optimal}. Second, our methods assume that the noise density $f$ is known, whereas in practice it must be chosen or learned from the data. One part of this is to quantify how robust the ordering guarantees and population risk gaps are when $f$ is misspecified. A more ambitious direction is to learn $f$ together with the ordering, for example by fitting the residual density with flexible generative models. This would extend our methods to settings where the noise law is unknown.

\clearpage
\bibliography{references}

@book{Pearl_2009, place={Cambridge}, edition={Second}, title={Causality}, publisher={Cambridge University Press}, author={Pearl, Judea}, year={2009}}

@book{spirtes2000causation,
  title={Causation, prediction, and search},
  author={Spirtes, Peter and Glymour, Clark N and Scheines, Richard},
  year={2000},
  publisher={MIT Press}
}

@article{sachs2005causal,
  title={Causal protein-signaling networks derived from multiparameter single-cell data},
  author={Sachs, Karen and Perez, Omar and Pe'er, Dana and Lauffenburger, Douglas A and Nolan, Garry P},
  journal={Science},
  volume={308},
  number={5721},
  pages={523--529},
  year={2005},
  publisher={American Association for the Advancement of Science}
}

@article{sanford2012bayesian,
  title={A {Bayesian} network structure for operational risk modelling in structured finance operations},
  author={Sanford, Andrew D and Moosa, Imad A},
  journal={Journal of the Operational Research Society},
  volume={63},
  number={4},
  pages={431--444},
  year={2012},
  publisher={Taylor \& Francis}
}

@article{chickering2004large,
  title={Large-sample learning of {Bayesian} networks is {NP}-hard},
  author={Chickering, David Maxwell and Heckerman, David and Meek, Christopher},
  journal={Journal of Machine Learning Research},
  volume={5},
  number={Oct},
  pages={1287--1330},
  year={2004}
}

@article{spirtes1991algorithm,
  title={An algorithm for fast recovery of sparse causal graphs},
  author={Spirtes, Peter and Glymour, Clark},
  journal={Social Science Computer Review},
  volume={9},
  number={1},
  pages={62--72},
  year={1991},
  publisher={Sage Publications Sage CA: Thousand Oaks, CA}
}

@article{chickering2002optimal,
  title={Optimal structure identification with greedy search},
  author={Chickering, David Maxwell},
  journal={Journal of Machine Learning Research},
  volume={3},
  number={Nov},
  pages={507--554},
  year={2002}
}

@article{deng2024markov,
  title={Markov equivalence and consistency in differentiable structure learning},
  author={Deng, Chang and Bello, Kevin and Ravikumar, Pradeep and Aragam, Bryon},
  journal={Advances in Neural Information Processing Systems},
  volume={37},
  pages={91756--91797},
  year={2024}
}

@article{shimizu2006linear,
  title={A linear non-{Gaussian} acyclic model for causal discovery.},
  author={Shimizu, Shohei and Hoyer, Patrik O and Hyv{\"a}rinen, Aapo and Kerminen, Antti and Jordan, Michael},
  journal={Journal of Machine Learning Research},
  volume={7},
  number={10},
  year={2006}
}

@article{peters2014identifiability,
  title={Identifiability of {Gaussian} structural equation models with equal error variances},
  author={Peters, Jonas and B{\"u}hlmann, Peter},
  journal={Biometrika},
  volume={101},
  number={1},
  pages={219--228},
  year={2014},
  publisher={Oxford University Press}
}

@article{shimizu2011directlingam,
  title={{DirectLiNGAM}: A direct method for learning a linear non-{Gaussian} structural equation model},
  author={Shimizu, Shohei and Inazumi, Takanori and Sogawa, Yasuhiro and Hyv{\"a}rinen, Aapo and Kawahara, Yoshinobu and Washio, Takashi and Hoyer, Patrik O and Bollen, Kenneth},
  journal={Journal of Machine Learning Research-JMLR},
  volume={12},
  number={Apr},
  pages={1225--1248},
  year={2011}
}

@inproceedings{ghoshal2018learning,
  title={Learning linear structural equation models in polynomial time and sample complexity},
  author={Ghoshal, Asish and Honorio, Jean},
  booktitle={International Conference on Artificial Intelligence and Statistics},
  pages={1466--1475},
  year={2018},
  organization={PMLR}
}

@article{chen2019causal,
  title={On causal discovery with an equal-variance assumption},
  author={Chen, Wenyu and Drton, Mathias and Wang, Y Samuel},
  journal={Biometrika},
  volume={106},
  number={4},
  pages={973--980},
  year={2019},
  publisher={Oxford University Press}
}

@InProceedings{pmlr-v267-daskalakis25a,
  title = 	 {Learning {G}aussian {DAG} Models without Condition Number Bounds},
  author =       {Daskalakis, Constantinos Costis and Kandiros, Anthimos Vardis and Yao, Rui},
  booktitle = 	 {Proceedings of the 42nd International Conference on Machine Learning},
  pages = 	 {12554--12589},
  year = 	 {2025},
  volume = 	 {267},
}

@article{park2020heterogeneous,
  title={Identifiability of {Gaussian} linear structural equation models with homogeneous and heterogeneous error variances},
  author={Park, Gunwoong and Kim, Youngwhan},
  journal={Journal of the Korean Statistical Society},
  volume={49},
  number={1},
  pages={276--292},
  year={2020},
  publisher={Springer}
}

@article{zheng2018dags,
  title={{DAGS} with no tears: Continuous optimization for structure learning},
  author={Zheng, Xun and Aragam, Bryon and Ravikumar, Pradeep K and Xing, Eric P},
  journal={Advances in Neural Information Processing Systems},
  volume={31},
  year={2018}
}

@article{bello2022dagma,
  title={{DAGMA}: Learning {DAG}s via {M}-matrices and a Log-determinant acyclicity characterization},
  author={Bello, Kevin and Aragam, Bryon and Ravikumar, Pradeep},
  journal={Advances in Neural Information Processing Systems},
  volume={35},
  pages={8226--8239},
  year={2022}
}

@article{ng2020role,
  title={On the role of sparsity and {DAG} constraints for learning linear {DAGS}},
  author={Ng, Ignavier and Ghassami, AmirEmad and Zhang, Kun},
  journal={Advances in Neural Information Processing Systems},
  volume={33},
  pages={17943--17954},
  year={2020}
}

@article{hyvarinen2013pairwise,
  title={Pairwise likelihood ratios for estimation of non-{Gaussian} structural equation models},
  author={Hyv{\"a}rinen, Aapo and Smith, Stephen M},
  journal={The Journal of Machine Learning Research},
  volume={14},
  number={1},
  pages={111--152},
  year={2013},
  publisher={JMLR. org}
}

@article{park2023computationally,
  title={Computationally Efficient Learning of {Gaussian} Linear Structural Equation Models with Equal Error Variances},
  author={Park, Gunwoong},
  journal={Journal of Computational and Graphical Statistics},
  volume={32},
  number={3},
  pages={1060--1073},
  year={2023},
  publisher={Taylor \& Francis}
}

@inproceedings{gao2022optimal,
  title={Optimal estimation of {Gaussian} {DAG} models},
  author={Gao, Ming and Tai, Wai Ming and Aragam, Bryon},
  booktitle={International Conference on Artificial Intelligence and Statistics},
  pages={8738--8757},
  year={2022},
  organization={PMLR}
}

@article{eberhardt2017introduction,
  title={Introduction to the foundations of causal discovery},
  author={Eberhardt, Frederick},
  journal={International Journal of Data Science and Analytics},
  volume={3},
  number={2},
  pages={81--91},
  year={2017},
  publisher={Springer}
}

@article{erdos1960evolution,
  author  = {Erd{\H{o}}s, P. and R{\'e}nyi, A.},
  title   = {On the Evolution of Random Graphs},
  journal = {Publications of the Mathematical Institute of the Hungarian Academy of Sciences},
  volume  = {5},
  number  = {1},
  pages   = {17--61},
  year    = {1960}
}

@article{barabasi1999emergence,
  title={Emergence of scaling in random networks},
  author={Barab{\'a}si, Albert-L{\'a}szl{\'o} and Albert, R{\'e}ka},
  journal={Science},
  volume={286},
  number={5439},
  pages={509--512},
  year={1999},
  publisher={American Association for the Advancement of Science}
}

@article{shen2020challenges,
  title={Challenges and opportunities with causal discovery algorithms: application to Alzheimer’s pathophysiology},
  author={Shen, Xinpeng and Ma, Sisi and Vemuri, Prashanthi and Simon, Gyorgy},
  journal={Scientific Reports},
  volume={10},
  number={1},
  pages={2975},
  year={2020},
  publisher={Nature Publishing Group UK London}
}

@article{lefcheck2016piecewisesem,
  title={piecewiseSEM: Piecewise structural equation modelling in {R} for ecology, evolution, and systematics},
  author={Lefcheck, Jonathan S},
  journal={Methods in Ecology and Evolution},
  volume={7},
  number={5},
  pages={573--579},
  year={2016},
  publisher={Wiley Online Library}
}

@article{runge2019inferring,
  title={Inferring causation from time series in Earth system sciences},
  author={Runge, Jakob and Bathiany, Sebastian and Bollt, Erik and Camps-Valls, Gustau and Coumou, Dim and Deyle, Ethan and Glymour, Clark and Kretschmer, Marlene and Mahecha, Miguel D and Mu{\~n}oz-Mar{\'\i}, Jordi and others},
  journal={Nature Communications},
  volume={10},
  number={1},
  pages={2553},
  year={2019},
  publisher={Nature Publishing Group UK London}
}

@article{hyvarinen2010estimation,
  title={Estimation of a structural vector autoregression model using non-{Gaussianity}.},
  author={Hyv{\"a}rinen, Aapo and Zhang, Kun and Shimizu, Shohei and Hoyer, Patrik O},
  journal={Journal of Machine Learning Research},
  volume={11},
  number={5},
  year={2010}
}

@incollection{chickering1996learning,
  title={Learning {Bayesian} networks is {NP}-complete},
  author={Chickering, David Maxwell},
  booktitle={Learning from Data: Artificial Intelligence and Statistics V},
  pages={121--130},
  year={1996},
  publisher={Springer}
}

@article{reisach2021beware,
  title={Beware of the simulated {DAG}! causal discovery benchmarks may be easy to game},
  author={Reisach, Alexander and Seiler, Christof and Weichwald, Sebastian},
  journal={Advances in Neural Information Processing Systems},
  volume={34},
  pages={27772--27784},
  year={2021}
}

@inproceedings{ng2024structure,
  title={Structure learning with continuous optimization: A sober look and beyond},
  author={Ng, Ignavier and Huang, Biwei and Zhang, Kun},
  booktitle={Causal Learning and Reasoning},
  pages={71--105},
  year={2024},
  organization={PMLR}
}

@InProceedings{oh2025optimal,
  title = 	 {Optimal estimation of linear non-{Gaussian} structure equation models},
  author =       {Oh, Sunmin and Han, Seungsu and Park, Gunwoong},
  booktitle = 	 {Proceedings of The 28th International Conference on Artificial Intelligence and Statistics},
  pages = 	 {748--756},
  year = 	 {2025},
  volume = 	 {258},
  series = 	 {Proceedings of Machine Learning Research},
  month = 	 {03--05 May},
  publisher =    {PMLR},
}

@InProceedings{pmlr-v258-berrevoets25a,
  title = 	 {Differentiable Causal Structure Learning with Identifiability by {NOTIME}},
  author =       {Berrevoets, Jeroen and Raymaekers, Jakob and van der Schaar, Mihaela and Verdonck, Tim and Yao, Ruicong},
  booktitle = 	 {Proceedings of The 28th International Conference on Artificial Intelligence and Statistics},
  pages = 	 {3115--3123},
  year = 	 {2025},
  volume = 	 {258},
  series = 	 {Proceedings of Machine Learning Research},
  month = 	 {03--05 May},
  publisher =    {PMLR},
}

@article{gao2020polynomial,
  title={A polynomial-time algorithm for learning nonparametric causal graphs},
  author={Gao, Ming and Ding, Yi and Aragam, Bryon},
  journal={Advances in Neural Information Processing Systems},
  volume={33},
  pages={11599--11611},
  year={2020}
}

@article{rajendran2021structure,
  title={Structure learning in polynomial time: Greedy algorithms, {Bregman} information, and exponential families},
  author={Rajendran, Goutham and Kivva, Bohdan and Gao, Ming and Aragam, Bryon},
  journal={Advances in Neural Information Processing Systems},
  volume={34},
  pages={18660--18672},
  year={2021}
}

@article{xie2019efficient,
  title={An efficient entropy-based causal discovery method for linear structural equation models with IID noise variables},
  author={Xie, Feng and Cai, Ruichu and Zeng, Yan and Gao, Jiantao and Hao, Zhifeng},
  journal={IEEE Transactions on Neural Networks and Learning Systems},
  volume={31},
  number={5},
  pages={1667--1680},
  year={2019},
  publisher={IEEE}
}

@inproceedings{feigenbaum2024causal,
  title={Causal layering via conditional entropy},
  author={Feigenbaum, Itai and Arpit, Devansh and Heinecke, Shelby and Niebles, Juan Carlos and Yao, Weiran and Xiong, Caiming and Savarese, Silvio and Wang, Huan},
  booktitle={Causal Learning and Reasoning},
  pages={1176--1191},
  year={2024},
  organization={PMLR}
}

@article{loh2014high,
  title={High-dimensional learning of linear causal networks via inverse covariance estimation},
  author={Loh, Po-Ling and B{\"u}hlmann, Peter},
  journal={Journal of Machine Learning Research},
  volume={15},
  number={140},
  pages={3065--3105},
  year={2014}
}

\clearpage
\onecolumn

 \appendix
\section{Proofs}\label{app:proofs}

\subsection{Proof of Lemma~\ref{lem:ls_sink}}
\textit{\noindent Write the linear structural equation model as~\eqref{eq:sem_linear} where \(\Lambda_{ji}\neq 0\) precisely when \(i\to j\in E\). Since the graph is acyclic, \(I-\Lambda\) is invertible, and hence \( X=(\bm I-\bm \Lambda)^{-1}N.\)
Therefore, the covariance matrix of \(X\) is
\[
\bm \Sigma = \mathbb{E}\left[ XX^\top\right] = \mathbb{E}\left[ (\bm I - \bm \Lambda)^{-1}NN^\top(\bm I - \bm \Lambda)^{-\top}\right]
=\sigma^2(\bm I-\bm \Lambda)^{-1} (\bm I-\bm \Lambda)^{-\top},
\]
and the precision matrix is
\[
\bm \Omega=\bm \Sigma^{-1} 
=\frac{1}{\sigma^2}(\bm I-\bm \Lambda)^{\top} (\bm I-\bm \Lambda).
\]
For any random vector with covariance $\bm \Sigma$, the minimal mean squared error of linear prediction for one coordinate given all the others is the reciprocal of the corresponding diagonal entry of the precision matrix. Thus, \( \operatorname{Var}(X_i\mid X_{-i})=\frac{1}{\Omega_{ii}}. \)
Moreover,
\[
\Omega_{ii}
=\frac{1}{\sigma^2}
\sum_{j=1}^d (I-\Lambda)_{ji}^2.
\]
Because \(\Lambda_{ii}=0\), the \(j=i\) term equals \(1\). For \(j\neq i\), the entry \(\Lambda_{ji}\) is nonzero exactly when \(i\to j\), that is, when \(j\in\mathrm{CH}(i)\). Consequently,
\[
\Omega_{ii}
=\frac{1}{\sigma^2}
\left(1+\sum_{j\in\mathrm{CH}(i)}\Lambda_{ji}^2\right),
\]
and therefore
\[
\operatorname{Var}(X_i\mid X_{-i})
=\frac{\sigma^2}
{1+\sum_{j\in\mathrm{CH}(i)}\Lambda_{ji}^2}.
\]}

\subsection{Proof of Lemma~\ref{lem:certificate}}
\textit{\noindent Since every variable in $S$ is a non-descendant of $i$, the noise $N_i$ is independent of $X_S$, and $X_i - \beta^\top X_S = N_i + Z_\beta$ with $Z_\beta = \sum_{p \in \mathrm{PA}(i)} \Lambda_{ip} X_p - \beta^\top X_S$. The expected loss of any random variable $Y$ under $\rho_f = -\log f$ is the cross-entropy between the distributions of $Y$, which we denote as \(g_Y\) here,  and $f$, which decomposes as $h(g_Y) + \mathrm{KL}(g_Y \| f)$. The KL term is nonnegative, and $h(g_Y) \ge h(f)$ because differential entropy is nondecreasing under the addition of independent noise, with equality if and only if $Z_\beta$ is zero. If $\mathrm{PA}(i) \subseteq S$, the true weights give $Z_\beta = 0$ and the risk $c_f$, which by the above cannot be improved.}

\textit{Now let $S$ be ancestral with $\mathrm{PA}(i) \not\subseteq S$, and let $p$ be the last vertex of $\mathrm{PA}(i) \setminus S$ in a topological order. No other missing parent is a descendant of $p$, and since $S$ is ancestral and $p \notin S$, no vertex of $S$ is either. Hence $N_p$ enters $Z_\beta$ only through $\Lambda_{ip} X_p$, with coefficient $\Lambda_{ip} \neq 0$, for every $\beta$. Conditioning on all noises other than $N_i$ and $N_p$ gives $h(N_i + Z_\beta) \ge h(N_i + \Lambda_{ip} N_p) > h(f)$, the last step by the entropy power inequality. This bound does not depend on $\beta$, so $\mathcal{R}_f(X_i \mid X_S) > c_f$. This argument does not use faithfulness.
}

\subsection{Proof of Theorem~\ref{thm:top-down}}
\textit{\noindent We argue by induction that each $S_k$ is ancestral and each selected $t_k$ has all parents in $S_k$. The empty set is ancestral. Given ancestral $S_k$, some remaining variable has all parents in $S_k$ (take a remaining variable minimal in a fixed topological order: its parents precede it, and every remaining variable succeeds it, so its parents lie in $S_k$); by Lemma~\ref{lem:certificate} such variables attain risk exactly $c_f$, while any remaining variable with a missing parent has risk strictly greater. The argmin is therefore an eligible variable, and $S_{k+1}$ remains ancestral.
}

\subsection{Proof of Lemma~\ref{lem:sink-removal}}
\textit{\noindent Since $i$ is a sink, $X_i$ appears in no structural equation other than its own; deleting that equation leaves the remaining equations, and hence the joint distribution of $X_{-i}$, unchanged.
}

\subsection{Proof of Lemma~\ref{lem:strict-gap}}
\textit{\noindent Let $X_c$ be a child of $X_j$ with $b = \Lambda_{cj} \neq 0$. The linear combination
\begin{equation}
    W \;=\; X_c \;-\!\! \sum_{k \in \mathrm{PA}(c) \setminus \{j\}} \!\! \Lambda_{ck} X_k \;-\; b\!\! \sum_{k \in \mathrm{PA}(j)} \!\! \Lambda_{jk} X_k
    \;=\; b N_j + N_c
\end{equation}
involves only covariates in $X_{-j}$ and therefore lies in their linear span. Let $\beta_0$ be the feasible point realizing \eqref{eq:free-upper-bound}, with residual $N_j$, and perturb along $W$: the residual of $\beta_0 + \varepsilon\, w$ (where $w$ collects the coefficients of $W$) is $(1 - \varepsilon b) N_j - \varepsilon N_c$. By (R1), the risk $\varphi(\varepsilon)$ along this path satisfies $\varphi(0) = c_f$ and
\begin{equation}
    \varphi'(0) \;=\; -\,\mathbb{E}\big[\psi_f(N_j)\,(b N_j + N_c)\big]
    \;=\; -\,b\;\mathbb{E}\big[N_j\psi_f(N_j)\big] \;-\; \mathbb{E}\,\big[\psi_f(N_j)\big]\;\mathbb{E}\,\big[N_c\big]
    \;=\; -\,b,
\end{equation}
where the expectation splits by independence of $N_j$ and $N_c$ and integrability of $N_c$, and the last equality uses the identities in (R1). Since $b \neq 0$, a strict descent direction exists, and the minimal risk is strictly below $\varphi(0) = c_f$.
}

\subsection{Proof of Theorem~\ref{thm:bottom-up}}
\textit{\noindent By Lemma~\ref{lem:sink-removal}, at every step the remaining variables follow a linear SEM with i.i.d.\ noise $\sim f$ whose graph is the induced subgraph, so it suffices to argue one step. By Lemma~\ref{lem:certificate}, every sink of the current graph attains risk exactly $c_f$; by Lemma~\ref{lem:strict-gap}, every non-sink has risk strictly below $c_f$. Hence the argmax is a sink.
}

\section{Failure mechanisms for the bottom-up statistic}\label{app:failure}

This appendix gives the population calculations behind the exponential and Cauchy examples discussed in Section~\ref{ssec:separation}.

\subsection{Boundary-dominated noise}

\begin{proposition}[Exponential noise]Let\[f(u)=e^{-(u+1)}\mathbf{1}\{u\ge-1\},\]the density of a mean-zero unit exponential random variable. Then, for every vertex \(j\),\[\mathcal{R}_f(X_j\mid X_{-j})=c_f=1.\]Consequently, the full-conditional risk contains no information about which vertices are sinks. By contrast, if \(S\) is ancestral and \(\operatorname{PA}(i)\not\subseteq S\), then\[\mathcal{R}_f(X_i\mid X_S)=+\infty.\]\end{proposition}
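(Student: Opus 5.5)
For exponential noise, \(\rho_f(u) = u+1\) on \(\{u\ge -1\}\) and \(+\infty\) otherwise, so \(c_f = \mathbb{E}[N+1] = 1\). The plan has three parts, one for each claim.

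\emph{Upper bound for every vertex.} By \eqref{eq:free-upper-bound}, every vertex satisfies \(\mathcal{R}_f(X_j \mid X_{-j}) \le c_f = 1\).

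\emph{Lower bound \(\mathcal{R}_f(X_j \mid X_{-j}) \ge 1\).} Fix \(\beta\) and write the residual as \(R_\beta = X_j - \beta^\top X_{-j}\). Since \(X = \bm B^{-1}N\), the residual is a linear form \(R_\beta = a^\top N\) in the noises. If \(\mathbb{P}(R_\beta < -1) > 0\), the risk is \(+\infty\). Otherwise \(R_\beta \ge -1\) almost surely, so \(\rho_f(R_\beta) = R_\beta + 1\) almost surely. Then \(\mathbb{E}[\rho_f(R_\beta)] = \mathbb{E}[a^\top N] + 1 = 1\), because every \(N_k\) has mean zero. Hence every feasible \(\beta\) with finite risk gives risk exactly \(1\), and the minimum is \(1\). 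This holds for every vertex, sink or not, so the full-conditional statistic is constant at \(c_f\).

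A remark is worth adding here: (R1) fails because \(f\) is discontinuous at \(-1\). The descent direction of Lemma~\ref{lem:strict-gap} pushes the residual's support below \(-1\) as soon as \(\varepsilon \ne 0\). I will also check that the infimum is attained at the true-parent coefficients. This is immediate, since those coefficients give residual \(N_j\) with risk \(1\).

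\emph{Top-down claim: \(\mathcal{R}_f(X_i \mid X_S) = +\infty\) when \(S\) is ancestral and \(\mathrm{PA}(i) \not\subseteq S\).} I reuse the structure from the proof of Lemma~\ref{lem:certificate}. Let \(p\) be the last missing parent in a topological order. Then for every \(\beta\),
\[
R_\beta = N_i + \Lambda_{ip} N_p + Y,
\]
where \(Y\) is a function of noises other than \(N_i\) and \(N_p\), and the three pieces are independent.

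The goal is to show \(\mathbb{P}(R_\beta < -1) > 0\) for every \(\beta\). Then the risk is \(+\infty\) for all \(\beta\), and so is the minimum. I split on the sign of \(\Lambda_{ip}\).
\begin{itemize}
\item If \(\Lambda_{ip} < 0\), then \(\Lambda_{ip} N_p\) is unbounded below. Independence and positive-probability events make \(N_i + \Lambda_{ip}N_p + Y < -1\) with positive probability.
\item If \(\Lambda_{ip} > 0\), the decomposition alone gives no downward push, because \(N_i + \Lambda_{ip}N_p \ge -1 - \Lambda_{ip}\). I need a different argument. The mean-zero argument still applies: if \(R_\beta \ge -1\) almost surely, the risk equals \(1 + \mathbb{E}[R_\beta] = 1\), not \(+\infty\). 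So I must rule out \(R_\beta \ge -1\) almost surely directly. I would use the essential infimum: for independent summands, \(\operatorname{ess\,inf} R_\beta\) is the sum of the summands' essential infima. For \(N_i + \Lambda_{ip}N_p\) this sum is \(-1 - \Lambda_{ip} < -1\). Then \(\operatorname{ess\,inf} Y \le \mathbb{E}[Y] = 0\), so the total is strictly below \(-1\), giving \(\mathbb{P}(R_\beta < -1) > 0\).
\end{itemize}

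The essential-infimum argument actually covers both signs at once. When \(\Lambda_{ip} < 0\), the infimum is \(-\infty\). I will present it uniformly.

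The main obstacle is justifying \(\operatorname{ess\,inf} Y \le 0\), which relies on \(Y\) being integrable with mean zero (true, since it is a finite linear form in mean-zero noises), and checking additivity of the essential infimum under independence. Both are elementary.
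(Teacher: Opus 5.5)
Your proof is correct and is essentially the paper's argument. The true-parent upper bound and the centering argument give $\mathcal{R}_f(X_j\mid X_{-j})=1$ exactly as in the paper, and the top-down claim rests on the same support-boundary-plus-independence mechanism; the only difference is that you compute $\operatorname{ess\,inf} R_\beta<-1$ directly from $R_\beta=N_i+\Lambda_{ip}N_p+Y$, whereas the paper uses the boundary of $N_i$ and centering to force $Z_\beta=0$ and then cites Lemma~\ref{lem:certificate} to rule that out.

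One aside in your remark is overstated, though your proof does not use it: along the path of Lemma~\ref{lem:strict-gap} with $b\le-1$ and small $\varepsilon<0$, the residual $(1-\varepsilon b)N_j-\varepsilon N_c$ has essential infimum $-1+\varepsilon(1+b)\ge-1$, so its risk stays exactly $1$ rather than becoming infinite.
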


\begin{proof}The matched-loss is \[\rho_f(u)=\begin{cases} u+1, & u\geq-1,\\ +\infty, & u<-1.\end{cases}\]Since \(\mathbb EN=0\),\[c_f=\mathbb E(N+1)=1.\]

For any vertex \(j\), the coefficient vector placing the true structural coefficients on \(\operatorname{PA}(j)\) and zero elsewhere is feasible in the regression on \(X_{-j}\), and its residual is \(N_j\). Hence\[\mathcal{R}_f(X_j\mid X_{-j})\leq1.\]Conversely, every residual \(R=X_j-\beta^\top X_{-j}\) is centered. If \(R<-1\) with positive probability, then its risk is infinite. If \(R\geq-1\), then\[\mathbb E\rho(R)=\mathbb E(R+1)=1.\]Thus no coefficient vector achieves risk below one, proving\[\mathcal{R}_f(X_j\mid X_{-j})=1\]for every \(j\).

For the top-down claim, Lemma~\ref{lem:certificate} writes any residual as\[X_i-\beta^\top X_S=N_i+Z_\beta,\]where \(Z_\beta\) is centered and independent of \(N_i\). Finite risk would require\[N_i+Z_\beta\geq-1\qquad\text{almost surely}.\]Because \(N_i\) takes values arbitrarily close to \(-1\), independence forces \(Z_\beta\geq0\) almost surely. Since \(Z_\beta\) is centered, this implies \(Z_\beta=0\) almost surely. When \(S\) is ancestral but omits a parent of \(i\), Lemma~\ref{lem:certificate} shows that \(Z_\beta=0\) is impossible. Therefore every candidate coefficient vector has infinite risk.\end{proof}

The failure occurs because the location information is carried by the boundary of the support, while the loss is affine in the interior. The bottom-up statistic minimizes risk and therefore cannot turn infeasibility away from the true coefficient vector into a value below \(c_f\).

\subsection{Scale-additive noise}

\begin{proposition}[Cauchy noise]Let \(N_1,N_2\) be independent standard Cauchy variables and consider\[X_1=N_1,\qquad X_2=bX_1+N_2.\]If \(0<|b|\leq1\), then\[\mathcal{R}_f(X_1\mid X_2)=\mathcal{R}_f(X_2\mid X_1)=c_f=\log(4\pi),\]so the bottom-up statistic cannot distinguish the sink. If \(|b|>1\), then\[\mathcal{R}_f(X_1\mid X_2)=2\log\left(1+\frac1{|b|}\right)+\log\pi<c_f.\]The top-down statistic identifies the source for every \(b\neq0\).
\end{proposition}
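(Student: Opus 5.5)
The plan is to reduce every conditional risk in the two-node model to the risk of a single centered Cauchy variable with some scale $s$, and then to optimize the scale explicitly. Two standard facts do all the work. The first is \emph{stability}: if $N_1,N_2$ are independent standard Cauchy, then $aN_1+cN_2$ is Cauchy with location $0$ and scale $|a|+|c|$. The second is a closed form for the risk. With $\rho_f(u)=\log\pi+\log(1+u^2)$, a variable $Y\sim\mathrm{Cauchy}(0,s)$ satisfies
\[
\mathbb{E}\,\rho_f(Y)=\log\pi+2\log(1+s).
\]

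To prove this formula, I would use that the $\mathrm{Cauchy}(0,s)$ density is the Poisson kernel of the upper half-plane evaluated at the point $is$. The function $u(z)=\log|z+i|^2$ is harmonic on the upper half-plane, since its only singularity is at $-i$, and it grows only logarithmically. By the Poisson representation, $\mathbb{E}\,u(Y)=u(is)=2\log(1+s)$. I expect this step to be the main technical point. It needs the Poisson integral to represent $u$ despite its logarithmic growth, which can be justified by a truncation argument or by checking that $\int |u|\,dP<\infty$. A fallback is to differentiate $\int \log(1+y^2)\,\frac{s}{\pi(s^2+y^2)}\,dy$ in $s$ directly. Setting $s=1$ gives $c_f=\log\pi+2\log 2=\log(4\pi)$. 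Since $s\mapsto 2\log(1+s)$ is strictly increasing, minimizing the risk over $\beta$ is the same as minimizing the scale of the residual.

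Next I would compute the residuals.
\begin{itemize}
\item For $X_1$ given $X_2$, the residual is $X_1-\beta X_2=(1-\beta b)N_1-\beta N_2$, with scale $s(\beta)=|1-\beta b|+|\beta|$. This function is convex and piecewise linear in $\beta$, so its minimum is attained at one of its kinks, $\beta=0$ or $\beta=1/b$. These give $s(0)=1$ and $s(1/b)=1/|b|$, so $\min_\beta s(\beta)=\min(1,1/|b|)$.
\item For $X_2$ given $X_1$, the residual is $(b-\beta)N_1+N_2$, with scale $|b-\beta|+1\ge 1$. Equality holds at $\beta=b$, so $\mathcal{R}_f(X_2\mid X_1)=c_f$, consistent with Lemma~\ref{lem:certificate}.
\end{itemize}
When $0<|b|\le 1$, the minimal scale for $X_1$ is $1$, so both full-conditional risks equal $\log(4\pi)$ and the bottom-up statistic cannot single out the sink. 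When $|b|>1$, the minimal scale is $1/|b|$, which gives $\mathcal{R}_f(X_1\mid X_2)=2\log(1+1/|b|)+\log\pi<c_f$.

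For the top-down claim, I would note that the marginal risks are $\mathcal{R}_f(X_1)=c_f$ (scale $1$) and $\mathcal{R}_f(X_2)=\log\pi+2\log(1+|b|)>c_f$, because $X_2=bN_1+N_2$ has scale $1+|b|$. The same conclusion also follows from Lemma~\ref{lem:certificate}, since the empty set is ancestral and $X_2$ has a parent outside it. Hence the argmin in the first step of the top-down recursion is the source $X_1$ for every $b\neq 0$, and the second step is forced. Theorem~\ref{thm:top-down} applies because $h(f)=\log(4\pi)$ is finite.
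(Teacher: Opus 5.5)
Your proposal is correct and follows the paper's proof essentially step for step. Cauchy stability reduces each residual to a scale, the matched risk $\log\pi+2\log(1+s)$ is strictly increasing in $s$, the residual scales $|b-\beta|+1$ and $|1-\beta b|+|\beta|$ are minimized explicitly, and the top-down claim follows from the marginal scales $1$ and $1+|b|$. The only differences are minor. You justify the closed-form risk (via the Poisson kernel, or by differentiating in $s$), which the paper simply asserts. You also place the minimum of $|1-\beta b|+|\beta|$ at its kinks instead of using the paper's bound $|\beta|\ge|b\beta|$ for $|b|\le 1$, and this kink argument also covers the $|b|>1$ case, which the paper states without justification.
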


\begin{proof}For the standard Cauchy density,\[\rho_f(u)=\log\pi+\log(1+u^2),\qquad c_f=\log(4\pi).\]A Cauchy variable with scale \(s\) has matched risk \[2\log(1+s)+\log\pi,\]which is strictly increasing in \(s\). Moreover, independent Cauchy variables combine by addition of scales:\[aN_1+cN_2\sim\operatorname{Cauchy}(|a|+|c|).\]

For the sink \(X_2\), the residual after regression on \(X_1\) is\[X_2-tX_1=(b-t)N_1+N_2,\]whose scale \(|b-t|+1\) is minimized at \(t=b\). The minimum scale is one, so\[\mathcal{R}_f(X_2\mid X_1)=c_f.\]

For the non-sink \(X_1\),\[X_1-tX_2=(1-bt)N_1-tN_2,\]whose scale is\[s(t)=|1-bt|+|t|.\]If \(|b|\leq1\), then\[|1-bt|+|t|\geq |1-bt|+|bt|\geq1,\]
and equality is attained at \(t=0\). Thus the non-sink also has risk \(c_f\). If \(|b|>1\), the scale is minimized at \(t=1/b\), where it equals \(1/|b|\), giving the stated risk.

For top-down, \(X_1\) has Cauchy scale one, whereas \(X_2=bN_1+N_2\) has scale \(1+|b|\). Since the matched risk increases strictly with scale, \(X_1\) is selected as the source whenever \(b\neq0\).\end{proof}

\section{Experimental Results for Random Graphs}
\label{app:results}

\paragraph{Computing infrastructure.} All experiments were run on a cloud virtual machine with 4 CPU cores (AMD EPYC 7B13), 16 GB of RAM and no GPU, using Python 3.10 (package versions are listed in the supplementary code's \verb|requirements.txt|). Running all experiments takes about 5–6 hours with the replications parallelized across the 4 cores.

\paragraph{Results.} The experiments in Section~\ref{sec:experiments} use Erd\H{o}s--R\'enyi graphs with $4d$ expected edges (ER-4). This appendix repeats them on ER-2, SF-2 and SF-4 graphs, shown in Figures~\ref{fig:er2_plot}--\ref{fig:sf4_plot}. Scale-free graphs are generated by preferential attachment~\citep{barabasi1999emergence}, which produces a few hub vertices with many neighbors, in contrast to the more homogeneous degree distribution of Erd\H{o}s--R\'enyi graphs. Everything other than the graph family is held fixed as in the ER-4 experiments: $d = 20$ vertices, edge weights drawn uniformly from $\pm[0.5, 2]$, threshold $\omega = 0.3$, and $50$ replications at each sample size.

The conclusions of Section~\ref{sec:experiments} carry over to all three graph families. The matched-loss top-down method $\rho$-TD attains the lowest or near-lowest SHD in every panel, and its advantage is largest at small sample sizes and under heavy-tailed noise. For Laplace and Student-$t_3$ noise, all methods perform similarly by $n = 400$, while DirectLiNGAM remains noticeably worse at small $n$ on the denser SF-4 graphs. As with ER-4 experiments, using information about the noise distribution in $\rho$-DirectLiNGAM improves performance compared with DirectLiNGAM.

Both degenerate regimes of Section~\ref{ssec:separation} reappear. Under exponential noise, $\rho$-BU plateaus at an SHD that does not decrease with $n$, whereas $\rho$-TD recovers the graph almost exactly from $n = 200$ onward. Under Cauchy noise, $\rho$-BU improves on the least-squares methods but stays clearly above $\rho$-TD. The least-squares methods LS-TD and LS-BU decrease only slowly and remain far from the truth at $n = 1600$, since the variance on which least squares relies is undefined for Cauchy noise. The performance ranking of the methods remains the same as in the ER-4 experiments.

\begin{figure}[!ht]
    \centering
    \includegraphics[width=1\linewidth]{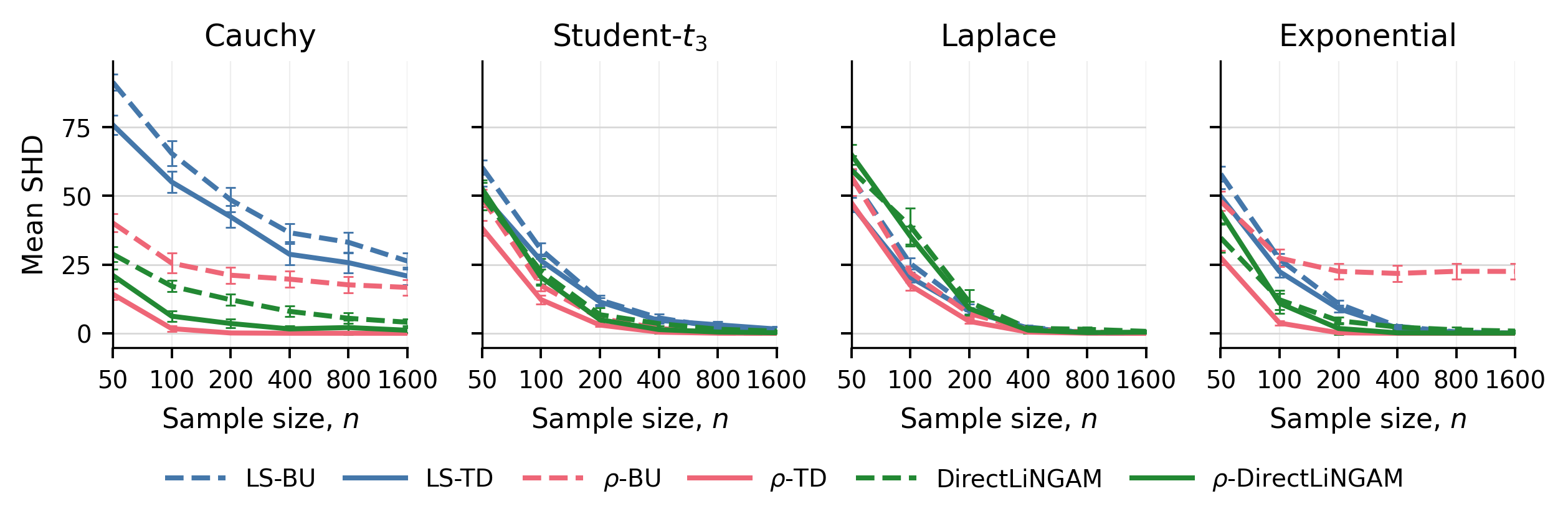}
        \caption{Structural Hamming distance between the true DAG and the learned DAG for different noise distributions, averaged over 50 replications of the ER-2 random graph. Error bars represent 95\% confidence intervals.}
    \label{fig:er2_plot}
\end{figure}

\begin{figure}[!ht]
    \centering
    \includegraphics[width=1\linewidth]{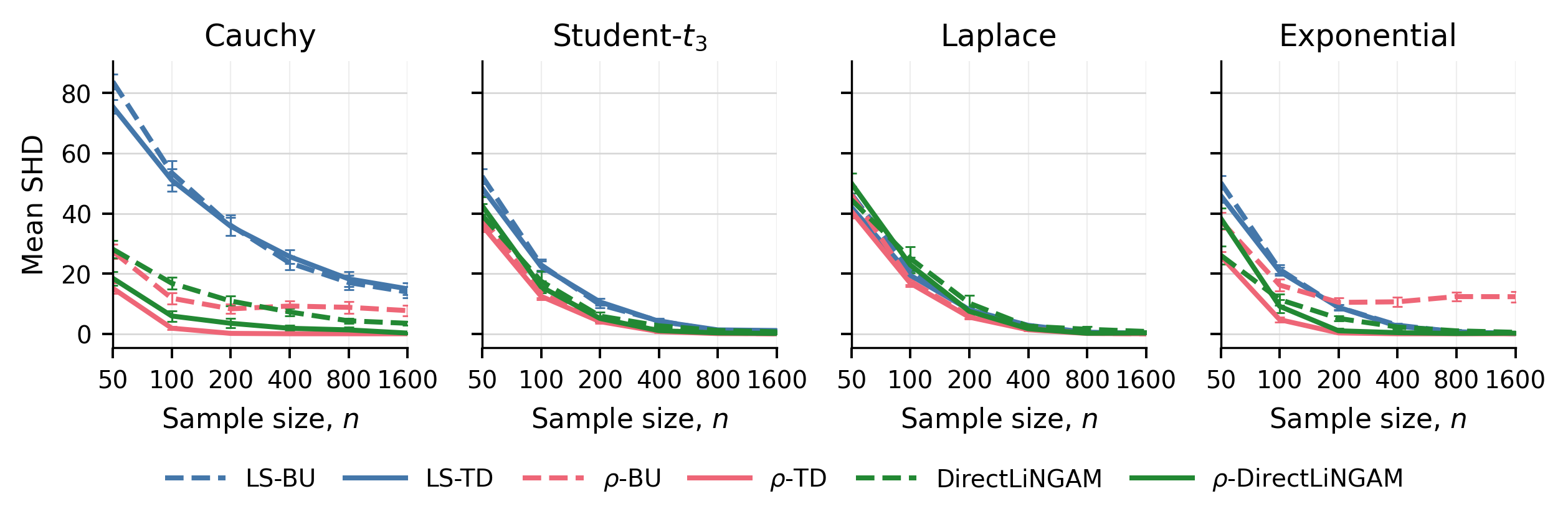}
    \caption{Structural Hamming distance between the true DAG and the learned DAG for different noise distributions, averaged over 50 replications of the SF-2 random graph. Error bars represent 95\% confidence intervals.}
    \label{fig:sf2_plot}
\end{figure}

\begin{figure}[!ht]
    \centering
    \includegraphics[width=1\linewidth]{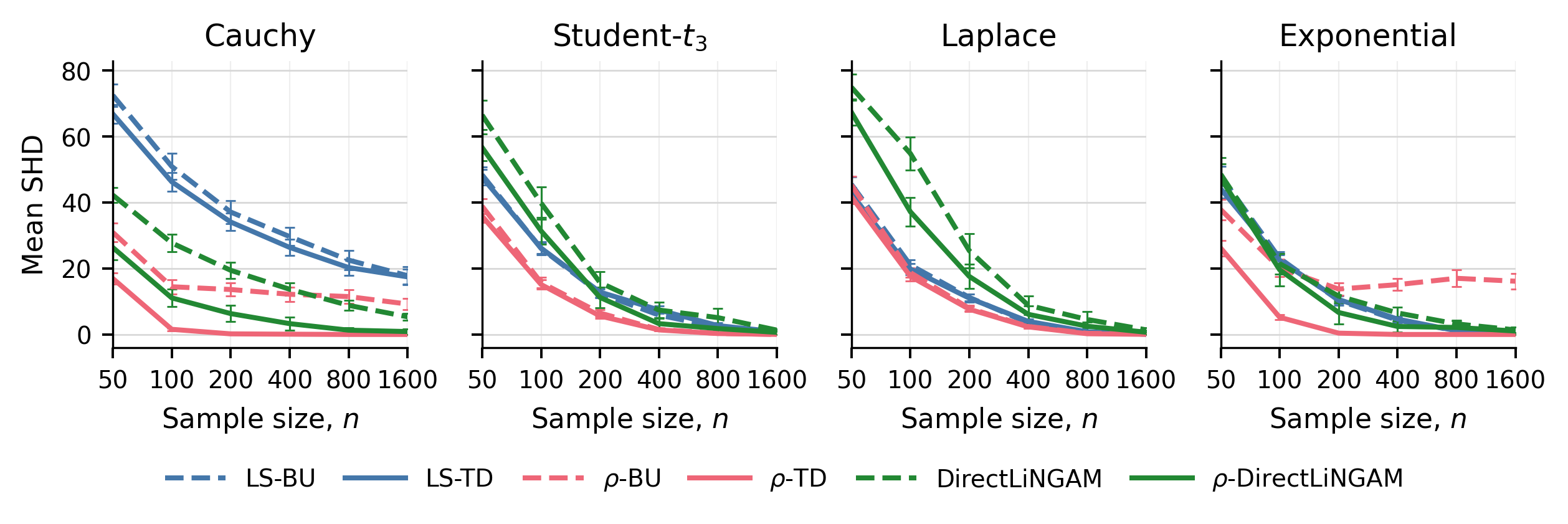}
    \caption{Structural Hamming distance between the true DAG and the learned DAG for different noise distributions, averaged over 50 replications of the SF-4 random graph. Error bars represent 95\% confidence intervals.}
    \label{fig:sf4_plot}
\end{figure}
\end{document}